\documentclass{article}



     \usepackage{arxiv}



\usepackage[utf8]{inputenc} 
\usepackage[T1]{fontenc}    
\usepackage{hyperref}       
\usepackage{url}            
\usepackage{booktabs}       
\usepackage{amsfonts}       
\usepackage{nicefrac}       
\usepackage{microtype}      

\usepackage{subcaption}
\usepackage{flushend}
\usepackage{tikz}
\usepackage{color}
\usepackage{amsmath, amsfonts, amssymb, mathrsfs}
\usepackage[amsmath,thmmarks]{ntheorem}

\usepackage{pgfplots}
\pgfplotsset{compat=newest}
\pgfplotsset{plot coordinates/math parser=false}

\usepackage{nth}

\usepackage{algorithm}
\usepackage{algorithmic}
\usepackage{bbm}

\usepackage[draft]{todonotes} 

\def\X{\mathcal{X}}
\def\Z{\mathcal{Z}}
\def\V{\mathcal{V}}
\def\Prob{\mathbb{P}}

\def\real{ \mathbb{R} }
\def\nat{ \mathbb{N} }
\def\Erw{ \mathbb{E} }	

\def\T{ \mathrm{T} }


\newcommand{\norm}[1]{\| #1 \|}
\newcommand{\inn}[2]{\langle #1,#2 \rangle}
\newcommand{\lrbrace}[1]{\left\{ #1 \right\}}

\newcommand{\normi}[1]{{\left\vert\kern-0.25ex\left\vert\kern-0.25ex\left\vert #1 
		\right\vert\kern-0.25ex\right\vert\kern-0.25ex\right\vert}}
\newcommand{\inni}[2]{{\langle\kern-0.25ex\langle #1,#2
		\rangle\kern-0.25ex\rangle}}

\DeclareMathOperator{\VI}{\text{VI}}

\DeclareMathOperator{\SOL}{\text{SOL}}

\DeclareMathOperator{\ANCCVC}{\text{ANCCVC}}

\DeclareMathOperator*{\argmax}{arg\,max}

\theorempreskipamount0.618ex
\theorempostskipamount0.618ex

\theoremstyle{plain}
\theoremheaderfont{\itshape\bfseries}
\theorembodyfont{\normalfont\itshape}
\theoremseparator{:}
\theoremsymbol{}
\newtheorem{theorem}{Theorem}
\newtheorem{lemma}[theorem]{Lemma}

\newtheorem{proposition}[theorem]{Proposition}

\newtheorem{definition}{Definition}

\theoremstyle{nonumberplain}

\theoremstyle{plain}
\theoremheaderfont{\itshape\bfseries}
\theorembodyfont{\normalfont}
\theoremseparator{:}
\theoremsymbol{}
\newtheorem{remark}{Remark}

\theoremstyle{plain}
\theoremheaderfont{\itshape\bfseries}
\theorembodyfont{\normalfont}
\theoremseparator{:}

\newtheorem{assum}{Assumption}

\theoremstyle{nonumberplain}
\theoremsymbol{\rule{1ex}{1ex}}

\newtheorem{proof}{Proof}
\newlength\fheight
\newlength\fwidth
\setlength\fheight{4cm}
\setlength\fwidth{6cm}


\title{Pricing Mechanism for Resource Sustainability in Competitive Online Learning Multi-Agent Systems}

%

\author{
	Ezra Tampubolon\\
	Department of Electrical and Computer Engineering\\
	Technical University Munich\\
	\texttt{ezra.tampubolon@tum.de} \\
	\And
	Holger Boche\\
	Department of Electrical and Computer Engineering\\
	Technical University Munich\\
	\texttt{boche@tum.de} \\
}


\begin{document}

\maketitle

\begin{abstract}
	In this paper, we consider the problem of resource congestion control for competing online learning agents.
	On the basis of non-cooperative game as the model for the interaction between the agents, and the noisy online mirror ascent as the model for rational behaviour of the agents, we propose a novel pricing mechanism which gives the agents incentives for sustainable use of the resources. Our mechanism is distributed and resource-centric, in the sense that it is done by the resources themselves and not by a centralized instance, and that it is based rather on the congestion state of the resources than the preferences of the agents. In case that the noise is persistent, and for several choices of the intrinsic parameter of the agents, such as their learning rate, and of the mechanism parameters, such as the learning rate of -, the progressivity of the price-setters, and the extrinsic price sensitivity of the agents, we show that the accumulative violation of the resource constraints of the resulted iterates is sub-linear w.r.t. the time horizon. Moreover, we provide numerical simulations to support our theoretical findings. 
\end{abstract}

\section{Introduction}
\textbf{Online Mirror Descent - Rationality in the Face of Unknown:}
\emph{Online learning} has become an important concept for real-time decision making in an unknown environment, and has led to several efficient methods in widespread applications (e.g. see \cite{Li2018,Chen2018}). Its process can be formulated as follows: at each time $t$, a learner selects an action $x_{t}$ from a set $\X$  and suffers the \emph{loss} $f_{t}(x_{t})$ specified by a function $f_{t}:\X\rightarrow\real$ apriori unknown to her. By means of additional information about the environment state, such as the gradient of $f_{t}$, the learner chooses her next action with the aim of minimizing her loss. The quality of a learning policy is measured by its \emph{regret} $\text{Reg}_{t}:=\max_{x\in\mathcal{X}}\sum_{\tau=1}^{t}[f_{\tau}(x)-f_{\tau}(X_{\tau})]$ and a satisfactory one is characterized by the \emph{no-regret} property (see e.g. \cite{Shalev-Shwartz2012,Bub2012,Belmaga2018}), i.e. sub-linear decay of its regret with time. The canonical and widely used class of no-regret policy in the presence of first-order oracle is the so-called \emph{online mirror descent (OMD)} \cite{Shalev2007}, known also in other literature as dual averaging \cite{Nesterov2009}. The iterate of OMD consists of averaging process of the obtained first-order information giving the \emph{score vector}, and subsequent \emph{mirror step} realizing the score in the action space $\X$. 

\textbf{Game Theory - Competition in an Online Environment:}
As has already been recognized in \cite{Zinkevich2003}, the concept of online learning can serve as a paradigm to describe the decision making process of rational agents in a \emph{non-cooperative game (NG)}. NG is a popular model, not only for economics - and social perspectives, but also for vast number of real-world technical applications (see also \cite{Scutari2012}), especially in those where cooperation between system participants is hard to establish, such as smart grid
\cite{Mohsenian2010,Saad2012,Deng2014,Li2016,Ma2016},
  networked system \cite{Barrera2015}, or in general large-scale complex systems (e.g. those which emerge within the framework of internet of things), where cooperation between system participants is hard to establish. The typical setting of NG is as follows: the reward $u(x_{t}^{(i)},x_{t}^{(-i)})$ of each agent $i$ obtained in a time slot $t$ depends on both, her action $x_{t}^{(i)}$ and the action others $x_{t}^{(-i)}$. To model the non-cooperativeness aspect, the latter is assumed to be not visible apriori for agent $i$. Thus her reward can be expressed as $u^{(i)}_{t}(x^{(i)}_{t})$ where $u^{(i)}_{t}$ is an apriori unknown function. This justifies the assumption of competitive rational agents as online learners.

\textbf{Resource Constraints:}
In widespread practical applications, the action of the agents is additionally related to the utilization of certain limited resources. For example: in the network applications, the user's (agents) choice of data transfer paths (strategy) increases the congestion of certain links and routers (resources) with limited capacity; In electric mobility, the vehicles' (agents) charge policy (strategy) increases the load of a grid, having limited electrical power (resource), at certain times \cite{Ma2013}.
An important issue which has to be dealt by a system designer and - manager is the danger of resource overload due to agents' egoistic behaviour, because the state of overutilization of resources can caused immense degradation of the overall system performance (see e.g. the problem of congestion and congestion collapse in networked system \cite{Abbas2016}) and negative environmental issues (e.g. caused by high CO2 emissions of electrical energy driven resources). Another example of events justifying the importance of sustainability aspect in a system of egoistic optimizing agents is the flash crash in US financial markets due to fully automated computerized trading (see e.g. \cite{Borch2016}).       

\textbf{Problem Description:}
This work addresses the problem of how to avoid or at least alleviate resource congestion in a system consisting of selfish online learning agents in competitive environment. A challenge associated with this issue is to design a congestion control method which demands as few information about agents' characteristics as possible. The reason is that the methods contrary to the latter requirement would need, in case the number of agents is large, exceptionally high computational power for the processing of the obtained information. Moreover, such methods would be inflexible for possible exit of - and entrance of new agents and therefore unsuitable for modern systems such as IoT.   

\textbf{Our Contribution:} Align with the postulated requirement, we provide a novel price-based congestion control method aiming to give incentives to egoistic OMD-applying agents for sustainable use of resources. The pricing is based on the congestion state of the resources and is done by the resources themselves rather than by a centralized instance. Furthermore, we provide sub-linear bound for the cumulative violation of the resource constraints and decaying bound for the violation of the constraints made by ergodic average of the population action. We are not aware of a comparable control methods in the learning literature, since it either focuses on the behaviour of a single learner or hides further possible influence of learners' decisions to certain scarce resources. Also, we are not aware of similar non-asymptotic focus on the resource aware behaviour in the game theory literature.

\textbf{Relation to prior works}
 One of the closest works to ours is \cite{Mertikopoulos2018}. There, an analysis of OMD with noisy feedback for NG with continuous action set is given. Several interesting extensions of this work have been made: The work \cite{DuvocellaMertikopoulos2018} extends the analysis for cases where the utility functions of the agents are time variant, and \cite{ZhouMertikopoulos2018} for cases where the feedback received by the agents is delayed. In contrast to \cite{Mertikopoulos2018}, we consider NG which underly in addition resource constraints. Our focus is not on the stability of the population dynamic, but but rather on the resource constraints awareness.
Another works closely related to ours are works which focus on algorithmically finding generalized Nash equilibrium. The most recent one is \cite{Paccagnan2017}. Based on a fixed-point method for finding the solution of a variational inequality (see e.g. Chapter 12 in \cite{Facchinei1}), a Euclidean projection-based semi-decentralized algorithm converging to a state fulfilling coupled resource constraints are fulfilled. In contrast to our result, their results are purely of asymptotic nature, i.e. there is no guarantee for the population behaviour in the finite time.
 
 
\section{Model Description and Preliminaries}
\textbf{Basic notations:}
For $a\in\nat$, we denote $[a]:=\lrbrace{1,\ldots,a}$ and $[a]_{0}=[a]\cup\lrbrace{0}$. For a convex subset $A\subseteq\real^{D}$, $\text{relint}(A)$ denotes the relative interior of $A$. The projection onto a closed convex subset $A$ of $\real^{D}$ is denoted by $\Pi_{A}$. The dual norm of a norm $\norm{\cdot}$ on $\real^{D}$ is denoted by $\norm{\cdot}_{*}$. 
$F:\real^{D}\rightarrow\real^{D}$ is said to be monotone on $\Z$ if $\inn{x_{1}-x_{2}}{F(x_{1})-F(x_{2})}\leq 0$, for all $x_{1},x_{2}\in \mathcal{Z}$. If in the latter strict inequality hold for $x_{1}\neq x_{2}$, then $F$ is said to be strictly monotone. $F$ is said to be $c$-strongly monotone on $\Z$ if $\inn{x_{1}-x_{2}}{F(x_{1})-F(x_{2})}\leq -c \norm{x_{1}-x_{2}}^{2}$, for all $x_{1},x_{2}\in \mathcal{Z}$. 
\subsection{Continuous Game with Coupled Resource Constraints}
\textbf{Continuous Non-Cooperative Game:} We consider throughout this work a finite set $[N]$ of agents playing a (repeated) \emph{non-cooperative game (NG)} $\Gamma$. During the NG, every agent $i\in[N]$ chooses and applies an action/strategy $x^{(i)}$ from a non-empty compact convex subset $\X_{i}$ of a finite-dimensional normed space $(\real^{D_{i}},\norm{\cdot}_{i})$. This process results in joint action/strategy-profile $x=(x^{(1)},\ldots,x^{(N)})\in\X:=\prod_{i=1}^{N}\X_{i}\in\real^{D}$, where $D:=\sum_{i=1}^{N}D_{i}$. In order to highlight the action of player $i$, we write $x=(x^{(i)},x^{(-i)})$ where $x^{(-i)}=(x^{(j)})_{j\neq i}\in \mathcal{X}_{-i}:=\prod_{j\neq i}\X_{j}$. Suppose that the population action at time $t$ is $x_{t}\in\mathcal{X}$. The payoff/reward agent $i$ received after $x_{t}$ is given by $u_{i}(x_{t}^{(i)},x_{t}^{(-i)})$, where $u_{i}:\X\rightarrow\real$ is a coordinate-wise concave and continuously differentiable function. 



\textbf{Coupled Resource Constraints:} 
For a certain number $R>0$ of resources, we model the relation between agents' action and resource utilization circumstance is modeled by an affine function $\phi=(\phi^{1},\ldots,\phi^{R}):\real^{D}\rightarrow\real^{R}$, $x\mapsto \mathbf{A}x-b$, where $\mathbf{A}:=[\mathbf{A}_{1},\ldots,\mathbf{A}_{N}]\in\real^{R\times D}$ with $\mathbf{A}_{i}\in\real^{M\times D_{i}}$, $\forall i\in [N]$, specifies the resource load caused by a population's action, and where $b\in\real^{R}$ describes the capacity of resources. Correspondingly, $\phi^{r}(x)$ gives the overload/congestion state of the resource $r\in [R]$ caused by the population action $x$. Since from operational - and sustainability point of view overload has to be kept low and even avoided, it is desired that the population strategy is contained in $\mathcal{Q}:=\mathcal{C}\cap\mathcal{X}$, where $\mathcal{C}:=\{\phi(x)\leq 0\}$. In order that this goal is feasible, we assume that $\mathcal{C}$ is non-empty. A sufficient condition leading to the latter circumstance is the following:
\begin{assum}[Slater's condition]
	\label{Eq:aioaoosjjshhdjjddss}
	There exists $x_{*}\in\text{relint}{(\X)}$ s.t. $\phi(x_{*})< 0$.
\end{assum}
This sort of constraint is a subclass of the so called \textit{coupled constraint}, where the compliance depends on the strategy not only of a single agent but also of the whole population.  
%


 \subsection{Basic Agents' Behaviour in the Repeated NG}
\paragraph{First-Order Feedback with Martingale Noise} Let be $i\in[N]$. Suppose that in the time slot $t$ the agents $[N]\setminus \lrbrace{i}$ has applied the action $x_{t}^{(-i)}\in\mathcal{X}_{-i}$. To improve her payoff, agent $i$ may use (if available) her individual utility gradient: $$v_{i}((\cdot),x^{(-i)}_{t}):\mathcal{X}_{i}\rightarrow\real^{D_{i}},\quad x_{t}^{(i)}\mapsto\nabla_{x_{t}^{(i)}}u_{i}(x_{t}^{(i)},x_{t}^{(-i)}),$$
and go along the direction of steepest ascent of her utility function. However, perfect first-order feedback is in general hard to obtain, especially without explicit knowledge of the utility function. Thus, we include in our model the possibility that agent $i$ has only access to noisy first-order oracle rather than a perfect one. Specifically, we assume that at each time $t$ and for a given action $X_{t}\in\mathcal{X}$, agent $i$ can query an estimate of $\hat{v}^{(i)}_{t}$ of $v_{i}(X_{t})$ satisfying $\Erw[\norm{\hat{v}^{(i)}_{t}}_{*}]<\infty$ and $\Erw[\hat{v}^{(i)}_{t}|\mathcal{F}_{t}]=v_{i}(X_{t})$, 
 where $\mathcal{F}_{t}$ is an element of a filtration $\mathbb{F}:=(\mathcal{F}_{t})_{t\in\nat_{0}}$ on a probability space $(\Omega,\Sigma,\Prob)$, which we assume throughout this work to be given. The canonical and commonly-used filtration in the literature is the filtration of the history of the considered iterates. Equivalently, we can model the stochastic gradient by
 \begin{equation}
 \label{Eq:NoiseGrad}
  \hat{v}^{(i)}_{t}=v_{i}(X_{t})+\xi^{(i)}_{t+1},
 \end{equation}
   where $(\xi^{(i)}_{t})_{t\in\nat}$ be a $\real^{D_{i}}$-valued \emph{$\mathbb{F}$-martingale difference sequence}, i.e. it is \emph{$\mathbb{F}$-adapted}, in the sense that  $\xi_{t}$ is $\mathcal{F}_{t}$-measureable for all $t\in\nat$, and that its members are \emph{conditionally mean zero}, in the sense that $\Erw[\xi_{t}|\mathcal{F}_{t-1}]=0$, for all $t\in\nat$.
 If we work with the whole population, we use the following notations:
 \begin{equation*}
 v:\mathcal{X}\rightarrow\real^{D},~x\mapsto (v_{i}(x^{(i)},x^{(-i)}))_{i\in [N]},\quad\text{and}\quad\xi_{t}:=(\xi^{(i)}_{t})_{i\in [N]}
 \end{equation*}

\paragraph{Mirror Map - Realizing Action in the constraint set}
The following gives the map which allows the agents to project the iterate based on first-order informations to their individual constraint sets:
%
\begin{definition}[Regularizer/penalty function and Mirror Map]
	Let $\Z$ be a compact convex subset of a normed space $(\real^{M},\norm{\cdot})$, and $K>0$. We say $\psi:\Z\rightarrow\mathbb{R}$ is a $K$-strongly convex \textit{regularizer} (or \textit{penalty function}) on $\Z$, if $\psi$ is continuous and $K$-strongly convex on $\Z$. 
	The mirror map $\Phi:(\real^{M},\norm{\cdot}_{*})\rightarrow\Z$ induced by $\psi$ is defined by:
	$\Phi(y):=\argmax_{x\in\Z}\left\{\left\langle y,x\right\rangle-\psi(x)\right\}$
\end{definition}

Clearly, the mirror map is a generalization of the usual Euclidean projection.
An interesting example of mirror maps is the so-called logit choice $\Phi(y)=\exp(y)/\sum_{l=1}^{D}\exp(y_{l})$ which is generated by the $1$-strongly convex regularizer $\psi(x)=\sum_{k=1}^{D}x_{k}\log x_{k}$ on the probability simplex $\Delta\subset(\real^{D},\norm{\cdot}_{1})$. 
Throughout this work, we assume that each agent $i\in [N]$ possess a $K_{i}$-strongly convex regularizer $\psi_{i}$ which induces the mirror map $\Phi_{i}$, and the Fenchel coupling $F_{i}$. In order to emphasize the action of the whole population, we use the operator $\Phi:\real^{D+R}\rightarrow\X$, $y\mapsto (\Phi_{1}(y^{(1)}),\ldots,\Phi_{N}(y^{(N)}))$ and the total Fenchel coupling $F:\X\times\real^{D}$, $(x,y)\rightarrow\sum_{i=1}^{N}F_{i}(x_{i},y_{i})$.

\paragraph{Online Mirror Descent}
The foundation of the investigations made in this work is given by the following decision-making model of the agent $i\in [N]$: 
\begin{equation}
\label{Eq:aoaaosjsdhdjdhhddss2}
X^{(i)}_{t+1}=\Phi_{i}(Y_{t+1}^{(i)}),~Y^{(i)}_{t+1}=Y^{(i)}_{t}+\gamma_{t}\hat{v}^{(i)}_{t},\quad \gamma_{t}>0
\end{equation}
\eqref{Eq:aoaaosjsdhdjdhhddss2} is a canonical extension (see \cite{Mertikopoulos2018}) of the online mirror descent algorithm \cite{Shalev-Shwartz2012} to multi-agent competitive system. By writing the recursion in \eqref{Eq:aoaaosjsdhdjdhhddss2} explicitly, we obtain $Y^{(i)}_{t+1}=Y_{0}^{(i)}+\sum_{\tau=0}^{t}\gamma_{\tau}\hat{v}^{(i)}_{\tau}$, and consequently see that the decision of agent $i$ results in this context from action-scoring process by averaging the historical direction of the steepest ascent of her utility, whereby the aspect of "action-scoring" is best seen in the case $\mathcal{X}_{i}$ is the probability simplex and $\Phi_{i}$ is the logit choice. 


\subsection{Nash Equilibrium and Variational Inequality}
\label{SubSec:aoosjsjdhhdjdhhsggshhsgss}
One of the central concept in game theory is the so-called Nash equilibrium (NE) which denotes a feasible population strategy profile at which no agent can improve his reward by unilaterally deviating from his strategy. Besides the aspect that a NE gives a solution and thus a prediction of the decisions of rational agents in a competitive one-shot environment, it has the potential to be an equilibrium in the repeated NG setting with first-order feedback. Key to the understanding of this statement is the following equivalent concept: 
\begin{definition}[Variational Inequality (VI)]
	Let $\Z$ be a subset of a finite dimensional normed space $(\real^{M},\norm{\cdot})$, and suppose that $F:\mathcal{Z}\rightarrow (\real^{M},\norm{\cdot}_{*})$. A point $\overline{x}\in\Z$ is a solution of the variational inequality $\text{VI}(\mathcal{Z},F)$, if $\left\langle x-\overline{x} ,F(\overline{x})\right\rangle\leq 0$, for all $x\in\mathcal{Z}$.
	The set of solution of $\VI(\Z,F)$ is denoted by $\SOL(\Z,F)$.
\end{definition}
By "equivalent", we mean specifically that $\SOL(\X,v)$ coincides with the NE of $\Gamma$ (see e.g. Corollary 1 in \cite{Facchinei2007}). The definition of VI asserts that the agents gradients points toward $\SOL(\X,v)$ and therefore \eqref{Eq:aoaaosjsdhdjdhhddss2} converges (under certain condition on the learning rate) to $\SOL(\X,v)$ \cite{Mertikopoulos2018}. However elements of $\SOL(\X,v)$ do not necessarily satisfies the resource constraints $\mathcal{Q}$. Consequently we cannot expect resource sustainability behaviour of the population applying \eqref{Eq:aoaaosjsdhdjdhhddss2}.

In order to handle this issue one may consider instead $\SOL(\mathcal{Q},v)$, which is non-empty since $\mathcal{Q}$ is convex and compact and $v$ is monotone, and tries to ensure \eqref{Eq:aoaaosjsdhdjdhhddss2} (possibly by some modifications) to converges to this set. This procedure is the basic of our approach.

\section{Price Mechanism}
\label{Sec:aoosjsjshhdjdggdhdhddd}
In order to give the agents incentives for sustainable use of resources, our advice is to charge each agent additional cost for the amount of utilization of resources related to her action. Specifically, consider a time slot $t$, agent $i$ is obligate to pay $\tilde{\Lambda}_{t}^{\T}\mathbf{A}_{i}X_{t+1}^{(i)}$ for a possible future action $X^{(i)}_{t+1}\in\mathcal{X}_{i}$, where $\tilde{\Lambda}_{t}$ is a vector specifying the price of each resource at time $t$. So at time $t$, the utility function of agent $i$ becomes $u^{(i)}_{t}(\cdot)+\tilde{\Lambda}_{t}^{T}\mathbf{A}_{i}(\cdot)$,
and correspondingly assuming that the price information is not noisy, the gradient update \eqref{Eq:aoaaosjsdhdjdhhddss2} turns to:
\begin{equation}
\label{Eq:GradStep2}
Y_{t+1}^{(i)}=Y_{t}^{(i)}+\gamma_{t}(\hat{v}_{t}^{(i)}+\mathbf{A}_{i}^{\T}\tilde{\Lambda}_{t}).
\end{equation}    
The update of each entry of the price vector $\tilde{\Lambda}_{t+1}$ is done by each of the resources separately proportional to their own congestion state.
The specific mechanism is provided in Algorithm \ref{Alg:aoaishhjddhhddddeee}.
\begin{algorithm}
	\caption{}
	\begin{algorithmic}
		\REQUIRE Horizon length $T$,
		\begin{itemize}
			\item For each $t\in [T-1]_{0}$: agents' learning rate $\gamma_{\tau}>0$, resources' learning rate $\zeta_{\tau}\in(0,1)$, price progressivity $\eta_{\tau}\geq 0$, price sensitivity $\beta_{\tau}\geq 0$,
			\item Initialization: Score vectors $Y_{0}^{(i)}\in\real^{D_{i}}$, $i\in [N]$, prices $\Lambda^{r}_{0}\in\real_{\geq 0}$, $r\in [R]$.
		\end{itemize}
		\begin{center}
		\textbf{//Mechanism}
		\end{center}
		\FOR{$t=1,2,\ldots,T$}
		\STATE Every agent $i\in [N]$ mutually play $X^{(i)}_{t}\gets\Phi_{i}(Y^{(i)}_{t})$
	    \STATE \textbf{\emph{//Decision making via Online Learning}}
		\FOR{every player $i\in [N]$}
%
\STATE Observe noisy gradient utility feedback  \eqref{Eq:NoiseGrad} and update the score vector $Y^{(i)}_{t+1}$ via \eqref{Eq:GradStep2}
		\STATE Query the prices $\tilde{\Lambda}^{r}_{t}$ from the resources $r\in [R]$
		\ENDFOR
		\STATE \textbf{\emph{//Pricing}}
		\FOR{every resource $r\in [R]$}
		\STATE Check the actual own congestion state: $\phi_{t}^{r}=\phi^{r}(X_{t})=[\mathbf{A}X_{t}-b]_{r}$
		\STATE Update the price: $\Lambda_{t+1}^{r}\leftarrow\left[(1-\eta_{t})\Lambda_{t}^{r}+\zeta_{t}\phi^{r}_{t}\right]_{+}$ and $\tilde{\Lambda}_{t+1}^{r}\leftarrow\beta_{t}\cdot \Lambda_{t+1}^{r}$ 
		\ENDFOR 
		\ENDFOR
	\end{algorithmic}
	\label{Alg:aoaishhjddhhddddeee}
\end{algorithm}

The parameter $\beta_{\tau}$ specifies to what extent the price of a resource should be considered in the decision-making process of the agents. In order to understand the effect of this parameter to the population dynamic, let us consider the extreme cases $\beta_{\tau}=0$ and high $\beta_{\tau}>0$. With $\beta_{\tau}=0$, the population dynamic described in Algorithm \ref{Alg:aoaishhjddhhddddeee} turns to \eqref{Eq:aoaaosjsdhdjdhhddss2}. As already outlined in Subsection \ref{SubSec:aoosjsjdhhdjdhhsggshhsgss}, we cannot expect decaying resource congestion. Now, if $\beta_{\tau}>0$ is high, the agents tend to take the action with cheapest cost. Since the price of a resource is proportional to its congestion state, all agents might at worst (e.g. in the case $N=D$ and $\mathbf{A}_{i}=\mathbf{I}_{N}$, where an action corresponds directly to resource utilization choice) fully consume a single resource with the lowest congestion and cause therefore the latter's price and load to rise dramatically. Subsequently in the next time slot, they will all mutually fully utilized another less congested and cheaper resource causing its price and congestion to rise dramatically. This procedure will repeat, cause agents' consumption choice bounces at worst from a single resource to another one, and meanwhile violation of resource capacity constraints. This gedankenexperiment asserts in particular that high prices and thus high degree of control, in contrary to the intuition, does in general not support sustainable behaviour. Rather, one should allow for the latter's sake to a certain degree egoistic behaviour of the agents.
%

The parameter $\eta_{\tau}$ specifies the strength of the dependency of the price update on the previous price. $\eta_{t}=1$ corresponds to the extreme case where the price update only based on the actual congestion state $\phi_{t}^{r}$. We expect $\eta_{\tau}=1$ is not a good choice since it ignores the price dynamic and correspondingly the agents' consumption behaviour implicitly described therein. A problem which might occur with the extreme case $\eta_{\tau}=0$ is the rapid increase of the prices causing the price update insensitive against changes in the congestion state of the resources.


%

\section{Non-asymptotic Guarantee of the Price Mechanism}
In this section we provide a theoretical analysis of the price mechanism provided in Algorithm \ref{Alg:aoaishhjddhhddddeee}. Our emphasize is on the degree of its contribution to the resource-aware consumption behaviour of the agents, which we measured by the (time) \emph{average of the norm of the clipped cumulative violation of constraints (ANCCVC)}:
\begin{equation*}
\ANCCVC_{t}:=\tfrac{\Erw\left[ \norm{\left[ \sum_{t=0}^{t-1} (\mathbf{A}X_{t}-\mathbf{b})\right]_{+}}_{2}\right]}{t},\quad t\in\nat.
\end{equation*}
ANCCVC gives in particular an estimate for the time average congestion state of the resource since $\ANCCVC_{t}\geq\sum_{\tau=0}^{t-1}\phi_{r}(X_{\tau})/t$ for all $r\in [R]$. 

Throughout, $C_{1},C_{2},C_{3}$ denote non-negative constants fulfilling for all $x\in\mathcal{X}$ and $\lambda\in\real^{M}_{\geq 0}$:
\begin{equation}
\label{Eq:aoaoshshjhsjjss}
\norm{\mathbf{A}^{T}\lambda}_{*}\leq C_{1}\norm{\lambda}_{2},~\norm{v(x)}_{*}\leq C_{2}, \norm{g(x)}_{2}\leq C_{3},
\end{equation}  
which clearly exists by our assumptions on $u$ and $\mathcal{X}$. 
Our main result is the following:
\begin{theorem}
	\label{Eq:aoaosjjsdhdjjdhhdjjddd}
Given a horizon length $t\in\nat$ and learning rate $\gamma_{\tau}>0$, $\tau\in [t-1]_{0}$. Set the extrinsic price sensitivity of the agents as $\beta_{\tau}=2$ and $\zeta_{\tau}=\gamma_{\tau}$, for all $\tau\in [t-1]_{0}$ and suppose that for all $\tau\in [t-1]_{0}$ the agents' learning rate and the price progressivity fulfills:
\begin{equation}
\label{Eq:aiaiahhfggfdhdgdhdhdgsgss}
\eta_{\tau}^{2}-\tfrac{\eta_{\tau}}{4}+\tfrac{\gamma_{\tau}^{2}C_{1}^{2}}{4K}\leq 0\quad\quad\quad\text{(Trackability Condition (TC))},
\end{equation}
Then it holds for $\Lambda_{0}=0$ and $Y_{0}=0$:
\begin{equation}
\label{Eq:aasisihdhdgdhsgsgssss}
\Erw\left[ \norm{[\sum_{\tau=0}^{t-1}\gamma_{\tau}(\mathbf{A}X_{\tau}-b)]_{+}}_{2}^{2}\right] \leq 2\overline{\eta}_{t} \left(\Delta\psi
+\tilde{C}_{1}\sum_{\tau=0}^{t-1}\gamma^{2}_{\tau} \right)+ \overline{\eta}_{t}^{2}\left( \tilde{C}_{2}^{2}
+ \tfrac{4}{K}\sum_{\tau=1}^{t}\gamma_{\tau-1}^{2}\Erw[\norm{\xi_{\tau}}_{*}^{2}]\right), 
\end{equation}
where:
\begin{equation*}
\overline{\eta}_{t}:=\sum_{\tau=0}^{t-1}\eta_{\tau}+1,~\tilde{C}_{1}:=2\left(\tfrac{C_{2}^{2}}{K}+2C_{3}^{2}\right),~
\Delta\psi=\sum_{i=1}^{N}\left( \max_{\X_{i}}\psi_{i}-\min_{\X_{i}}\psi_{i}\right),~K:=\min_{i}K_{i},
\end{equation*}
and $\tilde{C}_{2}>0$ is a constant independent of $t$, the choice of mirror maps $\Phi_{i}$, $i\in [N]$, and the noises $(\xi_{\tau})_{\tau\in [T]}$.
\end{theorem}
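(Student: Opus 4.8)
The plan is to split the estimate into a deterministic \emph{tracking} inequality, showing that the clipped cumulative violation is dominated by the price trajectory, and a stochastic \emph{drift} bound for the prices. Throughout write $\phi(x)=\mathbf{A}x-b$ and $G_{t}:=\sum_{\tau=0}^{t-1}\gamma_{\tau}\phi(X_{\tau})$, so the left-hand side is $\Erw[\norm{[G_{t}]_{+}}_{2}^{2}]$. I would fix a comparator $\overline{x}\in\SOL(\mathcal{Q},v)$; this set is nonempty (as $\mathcal{Q}$ is convex compact and $v$ is monotone, as noted before Section~3) and feasible ($\phi(\overline{x})\leq 0$), and by Slater's condition (Assumption~\ref{Eq:aioaoosjjshhdjjddss}) it carries a Karush--Kuhn--Tucker multiplier $\mu\in\real^{R}_{\geq 0}$ with $\overline{x}\in\SOL(\X,v-\mathbf{A}^{\T}\mu)$ and complementary slackness $\inn{\mu}{\phi(\overline{x})}=0$. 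The constant $\tilde{C}_{2}$ will turn out to be $\norm{\mu}_{2}$, which is indeed independent of $t$, of the mirror maps, and of the noise.

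For the tracking inequality I would use $\zeta_{\tau}=\gamma_{\tau}$ and the nonexpansiveness of $[\cdot]_{+}$ (the projection onto $\real^{R}_{\geq 0}$): the update gives $\Lambda_{\tau+1}\geq(1-\eta_{\tau})\Lambda_{\tau}+\gamma_{\tau}\phi(X_{\tau})$ componentwise, i.e. $\gamma_{\tau}\phi(X_{\tau})\leq(\Lambda_{\tau+1}-\Lambda_{\tau})+\eta_{\tau}\Lambda_{\tau}$. Summing and using $\Lambda_{0}=0$ telescopes to $[G_{t}]_{+}\leq\Lambda_{t}+\sum_{\tau=0}^{t-1}\eta_{\tau}\Lambda_{\tau}$, and a weighted Cauchy--Schwarz step with weights $1,\eta_{0},\ldots,\eta_{t-1}$ summing to $\overline{\eta}_{t}$ yields $\norm{[G_{t}]_{+}}_{2}^{2}\leq\overline{\eta}_{t}\,B_{t}$ with $B_{t}:=\norm{\Lambda_{t}}_{2}^{2}+\sum_{\tau<t}\eta_{\tau}\norm{\Lambda_{\tau}}_{2}^{2}$. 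This is the only place the factor $\overline{\eta}_{t}$ is produced.

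To bound $B_{t}$ I would sum the squared price recursion $\norm{\Lambda_{t}}_{2}^{2}\leq\norm{(1-\eta_{t})\Lambda_{t}+\gamma_{t}\phi(X_{t})}_{2}^{2}$, which produces the negative mass $-\sum_{\tau}\eta_{\tau}(2-\eta_{\tau})\norm{\Lambda_{\tau}}_{2}^{2}$, the bounded term $C_{3}^{2}\sum_{\tau}\gamma_{\tau}^{2}$, and the coupling $2\sum_{\tau}(1-\eta_{\tau})\gamma_{\tau}\inn{\Lambda_{\tau}}{\phi(X_{\tau})}$; since $\eta_{\tau}(2-\eta_{\tau})\geq\eta_{\tau}$ the negative mass bounds $B_{t}$ from below. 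Because the price penalizes resource use, agent $i$'s penalized ascent field is $\hat v^{(i)}_{\tau}$ net of $\beta_{\tau}\mathbf{A}_{i}^{\T}\Lambda_{\tau}$, so the Fenchel-coupling descent inequality for \eqref{Eq:GradStep2}, after dropping $\inn{\Lambda_{\tau}}{\phi(\overline{x})}\leq 0$, gives $2\gamma_{\tau}\inn{\Lambda_{\tau}}{\phi(X_{\tau})}\leq[F(\overline{x},Y_{\tau})-F(\overline{x},Y_{\tau+1})]+\gamma_{\tau}\inn{X_{\tau}-\overline{x}}{\hat v_{\tau}}+\tfrac{\gamma_{\tau}^{2}}{2K}\norm{\hat v_{\tau}-\beta_{\tau}\mathbf{A}^{\T}\Lambda_{\tau}}_{*}^{2}$; the coefficient $2$ on the left is matched by $\beta_{\tau}=2$, which is why that choice is forced. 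Writing $1-\eta_{\tau}=1-\eta_{\tau}$, the $1$-part is substituted by this bound (telescoping to the single $F(\overline{x},Y_{0})\leq\Delta\psi$ via $Y_{0}=0$), while the residual $-2\eta_{\tau}\gamma_{\tau}\inn{\Lambda_{\tau}}{\phi(X_{\tau})}$ is Young-split. Splitting $\norm{\hat v_{\tau}-2\mathbf{A}^{\T}\Lambda_{\tau}}_{*}^{2}$ into a $C_{2}^{2}$ part, a noise part, and a $C_{1}^{2}\norm{\Lambda_{\tau}}_{2}^{2}$ part, all $\norm{\Lambda_{\tau}}_{2}^{2}$ contributions can be moved left and absorbed by the negative mass exactly when the net coefficient stays $\leq 0$, which is the content of the Trackability Condition \eqref{Eq:aiaiahhfggfdhdgdhdhdgsgss}. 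The pseudo-gradient is handled by monotonicity of $v$ and the KKT identity, $\inn{X_{\tau}-\overline{x}}{v(X_{\tau})}\leq\inn{\mu}{\phi(X_{\tau})}$, and taking expectations with $\Erw[\xi_{\tau+1}\mid\mathcal{F}_{\tau}]=0$ deletes the noise cross-term, leaving $\Erw[B_{t}]\leq\Delta\psi+\tilde{C}_{1}\sum_{\tau}\gamma_{\tau}^{2}+\tfrac{2}{K}\sum_{\tau}\gamma_{\tau}^{2}\Erw[\norm{\xi_{\tau+1}}_{*}^{2}]+\norm{\mu}_{2}\,\Erw[\norm{[G_{t}]_{+}}_{2}]$, the last summand being $\sum_{\tau}\gamma_{\tau}\Erw\inn{\mu}{\phi(X_{\tau})}=\Erw\inn{\mu}{G_{t}}\leq\norm{\mu}_{2}\Erw\norm{[G_{t}]_{+}}_{2}$.

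Finally I would combine the two parts. Writing $W:=\Erw[\norm{[G_{t}]_{+}}_{2}^{2}]$ and multiplying the $B_{t}$-estimate by $\overline{\eta}_{t}$, the self-referential term becomes $\overline{\eta}_{t}\norm{\mu}_{2}\Erw\norm{[G_{t}]_{+}}_{2}\leq\overline{\eta}_{t}\norm{\mu}_{2}\sqrt{W}\leq\tfrac12 W+\tfrac12\overline{\eta}_{t}^{2}\norm{\mu}_{2}^{2}$ by Jensen and Young; absorbing $\tfrac12 W$ on the left gives $W\leq 2\overline{\eta}_{t}(\Delta\psi+\tilde{C}_{1}\sum\gamma_{\tau}^{2})+\tfrac{4}{K}\overline{\eta}_{t}\sum\gamma_{\tau}^{2}\Erw[\norm{\xi_{\tau+1}}_{*}^{2}]+\overline{\eta}_{t}^{2}\norm{\mu}_{2}^{2}$, and bounding $\overline{\eta}_{t}\leq\overline{\eta}_{t}^{2}$ on the noise term (and reindexing $\tau\mapsto\tau-1$) reproduces the claimed inequality with $\tilde{C}_{2}=\norm{\mu}_{2}$. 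The hard part, I expect, is the primal--dual coupling in the drift step: keeping $\inn{\Lambda_{\tau}}{\phi(X_{\tau})}$ under control so the price remains a faithful surrogate for $[G_{t}]_{+}$ while the agents react only to the noisy penalized feedback. This is precisely where $\beta_{\tau}=2$ (to match the coupling coefficients) and the Trackability Condition (to keep the $\norm{\Lambda_{\tau}}_{2}^{2}$ recursion non-expansive) enter; the secondary difficulty is that the equilibrium $\overline{x}$ need not be feasible, forcing the Slater/KKT detour that creates the self-referential violation term closed by the final Young step.
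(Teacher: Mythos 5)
Your proposal is correct and delivers exactly the stated bound with $\tilde{C}_{2}=\norm{\mu}_{2}$, but it reaches it by a genuinely different mechanism than the paper. Writing $G_{t}:=\sum_{\tau=0}^{t-1}\gamma_{\tau}(\mathbf{A}X_{\tau}-b)$, the paper runs a primal--dual energy argument: it bounds the primal Fenchel-coupling energy $F(x_{*},Y_{t})$ (its Lemma on $\mathcal{V}_{t}^{(1)}$) together with \emph{two} copies of the dual energy $\tfrac{1}{2}\norm{\Lambda_{t}-\lambda}_{2}^{2}$ --- one at the KKT point $\lambda_{*}$, one at a free $\tilde{\lambda}\geq 0$ --- uses monotonicity of the extended operator $\tilde{v}$ on $\X\times\real^{R}_{\geq 0}$ (its Proposition on KKT/VI equivalence) to kill the drift, and then extracts the clipped norm through the conjugate identity $\sup_{\tilde{\lambda}\geq 0}\{\inn{\tilde{\lambda}}{G_{t}}-\tfrac{\overline{\eta}_{t}}{2}\norm{\tilde{\lambda}}_{2}^{2}\}=\tfrac{1}{2\overline{\eta}_{t}}\norm{[G_{t}]_{+}}_{2}^{2}$. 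You instead (i) extract the clipped norm by componentwise telescoping of the price recursion, $[G_{t}]_{+}\leq\Lambda_{t}+\sum_{\tau}\eta_{\tau}\Lambda_{\tau}$ (valid since $[z]_{+}\geq z$ and $\Lambda_{0}=0$; note this step is domination by the positive part, not nonexpansiveness), followed by weighted Cauchy--Schwarz, which is where your single factor $\overline{\eta}_{t}$ arises; and (ii) bring in the KKT multiplier only through the primal inequality $\inn{X_{\tau}-\overline{x}}{v(X_{\tau})}\leq\inn{\mu}{\phi(X_{\tau})}$ (monotonicity of $v$ plus complementary slackness), which creates the self-referential term $\norm{\mu}_{2}\,\Erw\norm{[G_{t}]_{+}}_{2}$ that you close by Jensen and Young. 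The shared core is identical --- the Fenchel-coupling one-step inequality, the forced choice $\beta_{\tau}=2$ to match the coupling coefficient, the TC to absorb the $\norm{\Lambda_{\tau}}_{2}^{2}$ terms, Slater/KKT, and the martingale cancellation --- but your route avoids the extended VI and the free-dual conjugation entirely, makes the intuition ``prices track the cumulative violation'' literal, and pays for it only with the self-referential closure, which costs the harmless relaxation $\overline{\eta}_{t}\leq\overline{\eta}_{t}^{2}$ that the final bound already tolerates. One bookkeeping caveat: your absorption step requires $2\eta_{\tau}^{2}-\eta_{\tau}+4\gamma_{\tau}^{2}C_{1}^{2}/K\leq 0$, which is not literally implied by the TC as printed; however, the paper's own derivation requires the even stronger $4\eta_{\tau}^{2}-\eta_{\tau}+4\gamma_{\tau}^{2}C_{1}^{2}/K\leq 0$ (the printed TC has a factor-$4$ slack in its $\gamma^{2}$-term relative to what the proof uses), so both proofs hold under the TC with adjusted absolute constants, and yours needs the weaker of the two corrected conditions.
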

The proof of this theorem is rather technical and is provided in Appendix \ref{Sec:aiaishshhddjjdjddsss}. It is based on the analysis of the dynamic of the energy function $\mathcal{E}_{t}((x,\lambda),\tilde{\lambda}):=\mathcal{E}^{1}_{t}((x,\lambda))+\mathcal{E}^{2}_{t}(\tilde{\lambda})$
where $\mathcal{E}^{1}_{t}((x,\lambda)):=F(x,Y_{t})+\tfrac{\norm{\Lambda_{t}-\lambda}_{2}^{2}}{2}$, $\mathcal{E}^{2}_{t}(\tilde{\lambda}):=\tfrac{\norm{\Lambda_{t}-\tilde{\lambda}}_{2}^{2}}{2}$, $x\in\mathcal{X}$, and $\lambda,\tilde{\lambda}\in\real^{R}$.
We use as the reference point for $\mathcal{E}^{1}_{t}$ the strategy $x\in\SOL(\mathcal{Q},v)$. Such $x$ possesses desired sustainable property and attracts $X_{t}$ (see Subsection \ref{SubSec:aoosjsjdhhdjdhhsggshhsgss}). The latter is ensured by additionally choosing $\lambda$ as the dual point in the KKT-system of $\SOL(\mathcal{Q},v)$. $\mathcal{E}^{2}_{t}$ with suitable $\tilde{\lambda}$ provide more specific information about the norm of the clipped weighted cumulative resource congestion.

The energy function $\mathcal{E}_{t}$ contrasts to that used in \cite{Mertikopoulos2018} which is $F(x,Y_{t})$. Furthermore, notice that since the dual constraint space $\real^{R}_{\geq 0}$ is not compact, $\mathcal{E}^{1}_{t}$ is not merely a trivial extension of the energy function $F(x,Y_{t})$ used in \cite{Mertikopoulos2018} yielding by seeing the dual variable as a new player. Furthermore, the energy function $\mathcal{E}_{t}^{1}$ is used in the literature of the constrained online optimization \cite{Mahdavi1}. In the game setting, it seems necessary to involve the additional energy function $\mathcal{E}_{2}(\Lambda_{t},\tilde{\lambda})$ and set the price sensitivity to $2$ in order to obtain the bound given in above theorem.


\textbf{On Trackability Condition:}
In order that \eqref{Eq:aiaiahhfggfdhdgdhdhdgsgss} is fulfilled at a time $\tau$, it is necessary that $\eta_{\tau}^{2}-(\eta_{\tau}/4)<0$. Therefore, the requirement \eqref{Eq:aiaiahhfggfdhdgdhdhdgsgss} demands that  $\eta_{\tau}<1/4$. This observation gives the advice to the resources not to be fully progressive in the price determination , i.e. to avoid the parameter $\eta_{\tau}\approx 1$. By attempting to solve the quadratic inequality  \eqref{Eq:aiaiahhfggfdhdgdhdhdgsgss} one can see that a necessary condition on $\gamma_{\tau}$ s.t. \eqref{Eq:aiaiahhfggfdhdgdhdhdgsgss} holds at time $\tau$ is $\gamma\leq\tfrac{1}{4 C_{1}}\sqrt{\tfrac{K}{2}}$
In this case, \eqref{Eq:aiaiahhfggfdhdgdhdhdgsgss} is equivalent to:
\begin{equation*}
\tfrac{\frac{1}{4}-\sqrt{\frac{1}{16}-\frac{\gamma_{\tau}^{2}C^{2}_{1}}{K}}}{2}\leq\eta_{\tau}\leq \tfrac{\frac{1}{4}+\sqrt{\frac{1}{16}+\frac{\gamma_{\tau}^{2}C^{2}_{1}}{K}}}{2}.
\end{equation*}
This observation assert that for small $\gamma_{\tau}$, one can choose $\eta_{\tau}$ approximately in the interval $(0,1/4)$. 
%
%
%
%
%
\begin{remark}
Suppose that $\gamma_{\tau}=C_{\gamma}\tau^{-p}$ for a certain $C_{\gamma}$ and $p>0$, and $\eta_{\tau}=C_{\eta} \tau^{-q}$ for a certain $C_{\eta}>0$ and $q>0$. In order that TC holds, it is necessary that $\eta_{\tau}$ decays with the same order like or slower than $\gamma_{\tau}^{2}$. Therefore we have to require $q\in (0,2p]$.  
\end{remark}

Now we are ready to give several consequences of Theorem \eqref{Eq:aoaosjjsdhdjjdhhdjjddd}. For simplicity, we assume that the noise is persistent, in the sense that $\Erw[\norm{\xi_{\tau}}^{2}_{*}]\leq \sigma_{*}^{2}$, for all $\tau\in\nat$.

\textbf{Constant Learning rate:}
Let us consider a finite time horizon $T\in\nat$ and $\beta_{\tau}=2$, for all $\tau\in [T-1]_{0}$. Furthermore, let us consider the case where both, the learning rate of the agents and the price progressivity are constant, i.e. $\gamma_{\tau}=\gamma$ and $\eta_{\tau}=\eta$, for all $\tau\in [T-1]_{0}$. Assuming that $\gamma$ and $\eta$ fulfills \eqref{Eq:aiaiahhfggfdhdgdhdhdgsgss} holds, it follows from \eqref{Eq:aasisihdhdgdhsgsgssss} and Jensen's inequality:
\begin{equation}
\label{Eq:aaiissjdhhdjshhsshhdjjdddd}
\Erw[\ANCCVC_{T}]\leq\sqrt{ \tfrac{2(\eta T +1)}{\gamma T^{2}}\left(  \tfrac{\Delta\psi}{\gamma}
	+\tilde{C}_{1}\gamma T\right) + \tfrac{(\eta T+1)^{2}}{\gamma^{2}T^{2}}\tilde{C}^{2}_{2}
	+ \tfrac{(\eta T+1)^{2}}{T^{2}}\tfrac{4\sigma_{*}^{2}T}{K}}.
\end{equation}
So, suppose that $\gamma=\Theta(T^{-p})$ with $p\in[1/2,1)$. 
Setting $\eta=\Theta(T^{-q})$ where $q\in(1/2,2p]$,
it yields:
\begin{equation}
\label{Eq:aaiisjshhdjjdhhddddfffdd3}
\Erw[\ANCCVC_{T}]\leq \mathcal{O}\left(T^{p-\frac{q+1}{2}}+ T^{p-q}+\sigma_{*} T^{\frac{1}{2}-q} \right) .
\end{equation}
In particular if $p=1/2$, we can choose $q=1$ and obtain a sub-linear bound for the ANCCVC at time $T$ of order $\mathcal{O}((1+\sigma_{*})T^{-\frac{1}{2}})$.
\paragraph{Variable Parameters:} If the agents are each willing to apply the ergodic average of their historical strategies instead of their actual strategies, we can ensure the decay of the violation of resource constraints with time in expectation. In order to show this, let us consider the infinite time horizon $T=\infty$. We use Jensen's inequality to obtain the following bound from \eqref{Eq:aasisihdhdgdhsgsgssss}:
\begin{align*}
\Erw\left[ \norm{[\sum_{\tau=0}^{t-1}\gamma_{\tau}(\mathbf{A}X_{\tau}-b)]_{+}}_{2}\right] \leq &\sqrt{2\overline{\eta}_{t} \left(\Delta\psi
+\tilde{C}_{1}\sum_{\tau=0}^{t-1}\gamma^{2}_{\tau} \right)+ \overline{\eta}_{t}^{2}\left( \tilde{C}_{2}^{2}
+ \tfrac{4\sigma_{*}^{2}}{K}\sum_{\tau=0}^{t-1}\gamma_{\tau}^{2}\right)},\quad\forall t\in\nat
\end{align*}
For the ergodic average $\overline{X}^{\gamma}_{t}=\tfrac{\sum_{\tau=0}^{t-1}\gamma_{\tau}X_{\tau}}{\sum_{\tau=0}^{t-1}\gamma_{\tau}}$ of the population iterate, we have:
\begin{equation*}
\label{Eq:aiaishshsddgdgdhhddd}
\Erw\left[ \norm{[\mathbf{A}\overline{X}^{\gamma}_{t}-b]_{+}}_{2}\right]
 \leq \tfrac{\sqrt{2\overline{\eta}_{t} \left(\Delta\psi
	+\tilde{C}_{1}\sum_{\tau=0}^{t-1}\gamma^{2}_{\tau} \right)+ \overline{\eta}_{t}^{2}\left( \tilde{C}_{2}^{2}
	+ \tfrac{4\sigma_{*}^{2}}{K}\sum_{\tau=0}^{t-1}\gamma_{\tau}^{2}\right)}}{\sum_{\tau=0}^{t-1}\gamma_{t}}, 
\end{equation*}
Setting $\gamma_{t}=\Theta(t^{-1/2})$ and $\eta_{t}=\Theta(t^{-1})$ fulfilling trackability condition,
it follows that the decay of the congestion state is in the noiseless case of order $\mathcal{O}(\ln(t)/\sqrt{t})$ and otherwise $\mathcal{O}(\ln^{3/2}(t)/\sqrt{t})$.
%
Now let be $\gamma_{t}=\Theta(t^{-1})$ and $\eta_{t}=\Theta(t^{-2})$, we have decay of order $\mathcal{O}((1+\sigma_{*})/\ln(t))$.
\section{Numerical Experiment}

\textbf{Exponential Weights Online Learning in Quadratic Game:}
We consider $N$ agents whose task is to allocate a certain amount of tasks to $R$ resources. The strategy space of agent $i$ corresponds to the simplex $\Delta:=\lrbrace{x^{(i)}\in\real^{R}_{\geq 0}:\sum_{r=1}^{R}x_{k}^{(i)}=1}$.
For a strategy $x^{(i)}\in\Delta$, $x_{r}^{(i)}$ stands for the proportion of tasks agent $i$ assigns to resource $r\in [R]$. The cost function of player $i$ is quadratic and given by $J^{(i)}(x^{(i)},x^{(-i)})=\tfrac{1}{2}\inn{x^{(i)}}{Q x^{(i)}}+\inn{C\sigma(x)+c^{i}}{x^{(i)}}$, where $\sigma(x)=\tfrac{1}{N}\sum_{i=1}^{N}x^{(i)}$
where $c_{i}\in\real^{D}$, $Q\in\real^{D\times D}$ and $C\in\real^{D\times D}$ are positive semi-definite, and either $Q$ or $C$ are positive definite. In order to apply our method, we set $u^{(i)}(x)=-J^{(i)}(x)$. The corresponding gradient mapping is given by $v(x)=-\left[ (I_{N}\otimes Q+ \frac{1}{N}\mathbf{1}_{N}\mathbf{1}_{N}^{\T}\otimes C)x+c+\frac{1}{N}(I_{N}\otimes C^{\T})x\right]$,
where $\otimes$ denotes the Kronecker product between two matrices. For the mirror map of the agents, we use the logit choice implemented by means of log-sum trick in order to avoid numerical overflow.

\textbf{Game Parameter:}
We consider $N=20$, $D=R=5$, and $T=500$, and study the case where parameters are fixed. We set $Q=2\sqrt{\tilde{Q}^{\T}\tilde{Q}}+\mathbf{I}_{D}$, where the entries of $\tilde{Q}$ is chosen independently normal distributed. Moreover we consider the case where  $C=4\mathbf{I}_{D}$, $c=0$, $\mathbf{A}=4\mathbf{1}_{N}^{\T}\otimes\mathbf{I}_{D}$, and $b=16\mathbf{1}_{D}$. For specific model of the stochastic feedback we use Gaussian vector with covariance matrix $\sigma^{2}\mathbf{I}_{D}$, where $\sigma>0$.

\textbf{Evaluation:}
For the experiments each depicted in Figures \ref{SubFig:beta}, \ref{SubFig:alpha}, and \ref{SubFig:gamma} we use different realization of the noise with same power $\sigma=5$.  
\begin{figure}
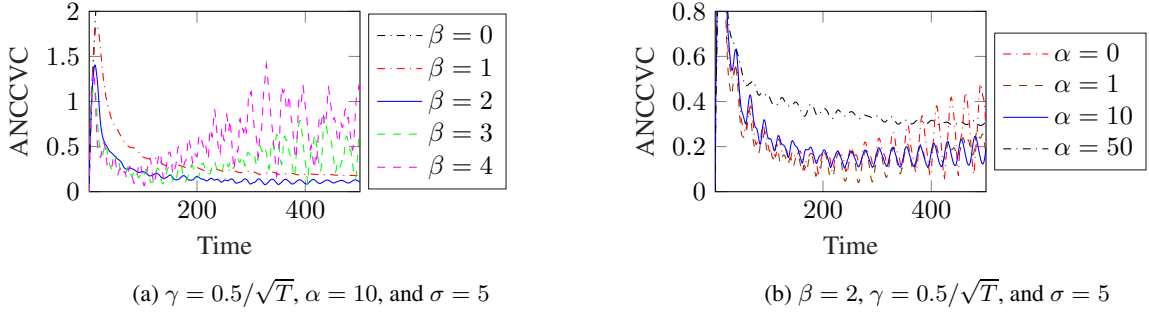

	\begin{subfigure}[c]{0.5\textwidth}
		
		\input{PlotNIPS1}
		\subcaption{$\gamma=0.5/\sqrt{T}$, $\alpha=10$, and $\sigma=5$}
		\label{SubFig:beta}
		
	\end{subfigure}
	\begin{subfigure}[c]{0.5\textwidth}
		
		\input{PlotNIPS2}
		\subcaption{$\beta=2$, $\gamma=0.5/\sqrt{T}$, and $\sigma=5$}
		\label{SubFig:alpha}
	\end{subfigure}

\caption{Dynamic of ANCCVC of Algorithm \ref{Alg:aoaishhjddhhddddeee} for different price sensitivities $\beta$ and - progressivities $\eta=\alpha\gamma^{2}$ }
\label{Fig:plot1}
\end{figure}
Figure \ref{SubFig:beta} shows that pure egoistic uncontrolled behaviour of the agents ($\beta=0$) may lead to immense overuse of the resources, and that control of agents' consumption via price mechanism ($\beta>0$) can prevent this event. 
With price regularization ($\beta>0$), we observe the tendency of oscillation in the agents' dynamic, whereby the following difference is observable: The choices $\beta=1$ and $\beta=2$ effect in stabilizing behaviour, while $\beta=3$ and $\beta=4$ effect in chaotic behaviour. This observations are aligned with the gedankenexperiment done in Section \ref{Sec:aoosjsjshhdjdggdhdhddd}, one of whose conclusions is that high price sensitivity might cause the agents' utilization strategies to mutually bouncing between single resources. Furthermore, Figure \ref{SubFig:beta} confirms the optimality of the selection of parameter choice $\beta=2$ given in Theorem \ref{Eq:aoaosjjsdhdjjdhhdjjddd}, since it tends to have the lowest ANCCVC. 
From Figure \ref{SubFig:alpha}, we can observe that in the non-progressive case $\alpha=0$ ($\eta=\alpha\gamma^{2}=0$), the corresponding dynamic of ANCCVC resonates heavily and possess at the end of the time-horizon ($t=500$) highest value (aside from $\alpha=50$). This asserts the importance of progressivity in the price determination and also justifies the importance of the TC \eqref{Eq:aiaiahhfggfdhdgdhdhdgsgss}. The parameters $\alpha=1$ and $\alpha=10$ has the best behaviour in this experiment. We see the tendency of decreasing oscillation with increasing price progressivity. However by observing $\alpha=10$ the overall performance at the end of the time horizon might be worse for high $\eta$. This observation underlines the role of $\eta$ as the parameter specifying the decay rate of ANCCVC asserted by the bound \eqref{Eq:aaiissjdhhdjshhsshhdjjdddd}. From Figure \ref{SubFig:gamma} the high oscillatory chaotic behaviour of $\gamma=1/\sqrt{T}$ and $\gamma=10/\sqrt{T}$ is aligned with the TC \eqref{Eq:aiaiahhfggfdhdgdhdhdgsgss} which eliminate the possibility that for fixed $\eta$, $\gamma$ can be arbitrarily high. Moreover the plot for $\gamma=0.05/\sqrt{T}$ shows that too small $\gamma$ caused slow decay of the ANCCVC as predicted by the bound \eqref{Eq:aaiissjdhhdjshhsshhdjjdddd}.   

\begin{figure}
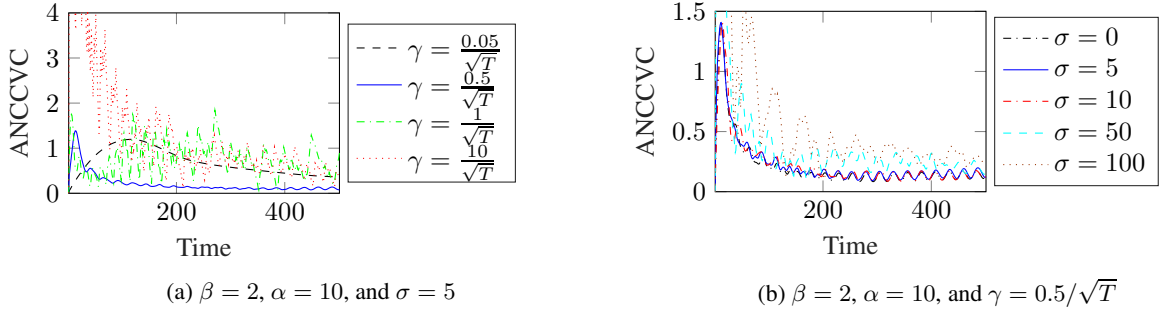

	\begin{subfigure}[c]{0.5\textwidth}
		
		\input{PlotNIPS3}
		\subcaption{$\beta=2$, $\alpha=10$, and $\sigma=5$}
		\label{SubFig:gamma}
	\end{subfigure}
	\begin{subfigure}[c]{0.5\textwidth}
		
		\input{PlotNIPS4}
		\subcaption{$\beta=2$, $\alpha=10$, and $\gamma=0.5/\sqrt{T}$}
		\label{SubFig:sigma}
	\end{subfigure}
	
	\caption{Dynamic of ANCCVC of Algorithm \ref{Alg:aoaishhjddhhddddeee} for different learning rates $\gamma$ and noise powers $\sigma$.}
\end{figure} 
\label{Fig:plot2}

Figure \ref{SubFig:sigma} shows that the noise power has no significant influence to the ANCCVC. This observation is somehow forecasted by our theoretical results since the noise term in the corresponding expectation bound decay with square roots of the time and the noise is light-tailed.
\section{Concluding Remarks}
Although machine learning methodology becomes evermore present in the modern technology and thus in our increasingly technologized daily life, most of the learning literature only concern with the optimal behaviour of a single learner independently from the possible impact of her decision. Motivated by this issue and everlasting resource scarcity problem, we took a game-theoretic macroscopic view of the online learning paradigm and extended the resulted model by concerning possible impact of the learners' decision to the scarce resources. This allowed us to design a control method via decentralized pricing which provides the learners incentives for sustainable behaviour. In the best case we can ensure the decay of the time average accumulation of the resource congestion of order $\mathcal{O}(T^{-1/2})$ with time horizon $T$, which also holds in expectation in case that the learners have noisy first-order feedback. Moreover if the online learners are willingly to apply the ergodic average - instead of their OMD-based strategy itself, we can ensure in the best case the decay of the resource congestion with time $t$ of order $\mathcal{O}(\ln(t)/\sqrt{t})$ in the noiseless case and in expectation of order $\mathcal{O}(1/\ln(t))$.

Nevertheless, there are several interesting questions remains open such as: whether the guarantees holds also with high probability; whether the quality of the pricing mechanism remains, if asynchronity in the agent update holds, be it in the choice of learning rate, or in the update time, or if mismatch between resources' - and agents' learning rate.

\bibliographystyle{unsrt}

%
%
%
%
%
\appendix

\section{Plots}
\begin{figure}[h]
		\input{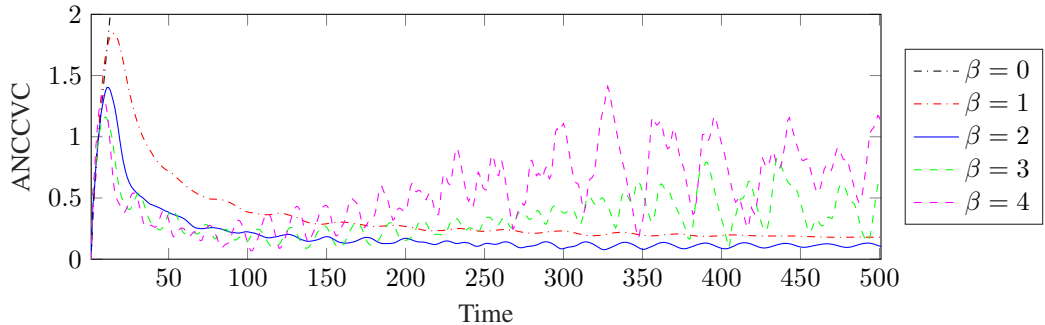}
		\caption{$\gamma=0.5/\sqrt{T}$, $\alpha=10$, and $\sigma=5$}
		\label{SubFig:betaL}	
\end{figure}
\begin{figure}[h]
	\input{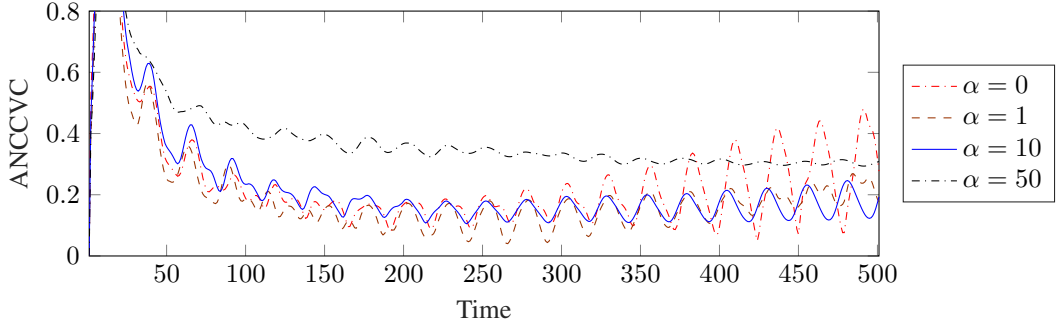}
	\caption{$\beta=2$, $\gamma=0.5/\sqrt{T}$, and $\sigma=5$}
\label{SubFig:alphaL}
\end{figure}
\begin{figure}[h]
	\input{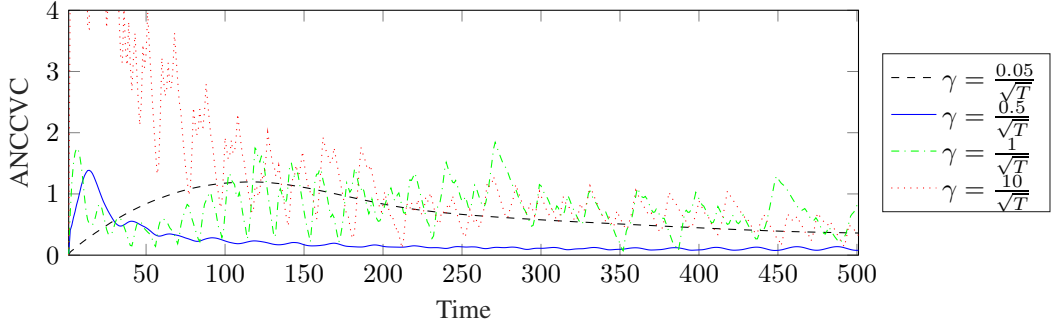}
		\caption{$\beta=2$, $\alpha=10$, and $\sigma=5$}
	\label{SubFig:gammaL}	
\end{figure}
\begin{figure}[h]
	\input{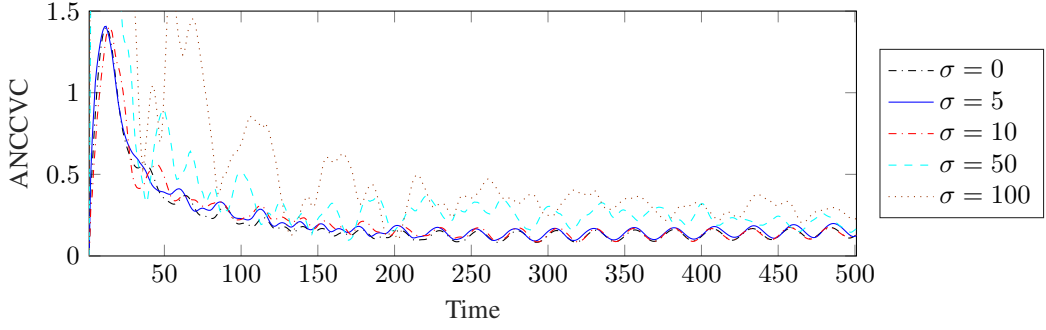}
		\caption{$\beta=2$, $\alpha=10$, and $\gamma=0.5/\sqrt{T}$}
	\label{SubFig:sigmaL}	
\end{figure}

\section{Proof of the Theorem 1}
\label{Sec:aiaishshhddjjdjddsss}
The energy function $F(x,Y_{t})$ is used in the literature of multi-agent online learning (see e.g. \cite{Mertikopoulos2018}). As done in \cite{Mertikopoulos2018} by analyzing the dynamic of $F(x_{*},Y_{t})$, $x_{*}\in\SOL(v,\mathcal{X})$ is a candidate for a population strategy, to which the multi-agent OMD iterate $X_{t}$ given by \eqref{Eq:aoaaosjsdhdjdhhddss2} converges. 

Following this approach, we analyze in the following $F(x_{*},Y_{t})$, where $x_{*}\in\SOL(v,\mathcal{Q})$. If it then turns out that $X_{t}$ converges to $x_{*}$, we can be sure that at least $X_{t}$ asymptotically fulfills the resource constraints which aligns with our aim. Toward this end, we have:
\begin{lemma}
	\label{Lem:aaosshdjhdgdhhsgsgss}
	For all $t\in\nat$ and $x\in\mathcal{X}$, it holds for $\mathcal{V}_{t}^{(1)}(x):=F(x,Y_{t})-F(x,Y_{0})$: 
	\begin{align*}
	\mathcal{V}_{t}^{(1)}(x)&\leq\sum_{\tau=0}^{t-1}\gamma_{\tau}\inn{X_{\tau}-x}{v(X_{\tau})}-\sum_{\tau=0}^{t-1}\gamma_{\tau}\inn{X_{\tau}-x}{\beta_{\tau}\mathbf{A}^{\T}\Lambda_{\tau}}
	\\
	&+\sum_{\tau=0}^{t-1}\tfrac{\beta_{\tau}^{2}\gamma_{\tau}^{2}C_{1}^{2}}{K}\norm{\Lambda_{\tau}}^{2}_{2}+ S_{t}(x)+\tfrac{2}{K}R_{t}+\tfrac{2 C_{2}^{2}\sum_{\tau=0}^{t-1}\gamma_{\tau}^{2}}{K},
	\end{align*}
	where $C_{1}$ and $C_{2}$ are given in \eqref{Eq:aoaoshshjhsjjss}, and where:
	\begin{equation*}
	S_{t}(x):=\sum_{\tau=0}^{t-1}\gamma_{\tau}\inn{X_{\tau}-x}{\xi_{\tau+1}},~R_{t}:=\sum_{\tau=0}^{t-1}\gamma_{\tau}^{2}\norm{\xi_{\tau+1}}^{2}_{*}.
	\end{equation*}
\end{lemma}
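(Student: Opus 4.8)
The plan is to derive the bound step by step from the fundamental descent inequality for the Fenchel coupling and then telescope. Recall that, since a $K$-strongly convex regularizer has a $1/K$-smooth conjugate, the induced coupling obeys the three-point inequality $F(x,y+v)\leq F(x,y)+\inn{v}{\Phi(y)-x}+\tfrac{1}{2K}\norm{v}_{*}^{2}$ for all $x\in\X$ and $y,v\in\real^{D}$. I would apply this with $y=Y_{\tau}$, $\Phi(Y_{\tau})=X_{\tau}$, and the per-step increment $v=\gamma_{\tau}(\hat{v}_{\tau}-\beta_{\tau}\mathbf{A}^{\T}\Lambda_{\tau})$ induced by \eqref{Eq:GradStep2}, where the price enters the effective ascent direction as a disincentive and hence with the minus sign displayed in the statement. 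Summing the one-step bounds over $\tau=0,\dots,t-1$ telescopes the left-hand side to $F(x,Y_{t})-F(x,Y_{0})=\mathcal{V}_{t}^{(1)}(x)$, leaving a first-order sum and a second-order sum to handle separately.

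For the first-order sum $\sum_{\tau}\gamma_{\tau}\inn{\hat{v}_{\tau}-\beta_{\tau}\mathbf{A}^{\T}\Lambda_{\tau}}{X_{\tau}-x}$ I would simply substitute the noise model \eqref{Eq:NoiseGrad}, namely $\hat{v}_{\tau}=v(X_{\tau})+\xi_{\tau+1}$. Using symmetry of $\inn{\cdot}{\cdot}$, this splits into exactly the three pieces appearing in the claim: the variational quantity $\sum_{\tau}\gamma_{\tau}\inn{X_{\tau}-x}{v(X_{\tau})}$, the price contribution $-\sum_{\tau}\gamma_{\tau}\inn{X_{\tau}-x}{\beta_{\tau}\mathbf{A}^{\T}\Lambda_{\tau}}$, and the martingale-noise remainder $S_{t}(x)=\sum_{\tau}\gamma_{\tau}\inn{X_{\tau}-x}{\xi_{\tau+1}}$. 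No inequality is invoked at this stage; it is pure rearrangement.

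The remaining work is to control the second-order sum $\sum_{\tau}\tfrac{\gamma_{\tau}^{2}}{2K}\norm{v(X_{\tau})+\xi_{\tau+1}-\beta_{\tau}\mathbf{A}^{\T}\Lambda_{\tau}}_{*}^{2}$. I would apply $\norm{a+b}_{*}^{2}\leq 2\norm{a}_{*}^{2}+2\norm{b}_{*}^{2}$ first with $a=-\beta_{\tau}\mathbf{A}^{\T}\Lambda_{\tau}$ and $b=v(X_{\tau})+\xi_{\tau+1}$, and then a second time inside $b$ to separate $v(X_{\tau})$ from $\xi_{\tau+1}$. Invoking the constants $\norm{\mathbf{A}^{\T}\lambda}_{*}\leq C_{1}\norm{\lambda}_{2}$ and $\norm{v(x)}_{*}\leq C_{2}$ from \eqref{Eq:aoaoshshjhsjjss} converts the three resulting pieces into $\tfrac{\beta_{\tau}^{2}\gamma_{\tau}^{2}C_{1}^{2}}{K}\norm{\Lambda_{\tau}}_{2}^{2}$, $\tfrac{2C_{2}^{2}\gamma_{\tau}^{2}}{K}$, and $\tfrac{2\gamma_{\tau}^{2}}{K}\norm{\xi_{\tau+1}}_{*}^{2}$; summing the latter two over $\tau$ produces precisely $\tfrac{2C_{2}^{2}\sum_{\tau}\gamma_{\tau}^{2}}{K}$ and $\tfrac{2}{K}R_{t}$. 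The factor $2$ released by the first split is exactly what cancels the $\tfrac{1}{2K}$ to leave $C_{1}^{2}/K$ in front of the price-induced term, so the coefficients match the statement without slack.

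The individual steps are routine; the only genuinely delicate point is the bookkeeping of signs. I would fix the convention that the price acts against resource consumption so that the linear price term carries the minus sign of the claim, and then check that the same signed increment is used consistently in both expansions — the second-order term being sign-insensitive, the convention matters only in the first-order sum. A secondary point worth stating explicitly is that this is a deterministic, pathwise estimate: no expectation or martingale property of $(\xi_{\tau})$ is used here, these being deferred to the treatment of $S_{t}(x)$ and $R_{t}$ in the proof of Theorem \ref{Eq:aoaosjjsdhdjjdhhdjjddd}.
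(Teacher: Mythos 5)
Your proposal is correct and follows essentially the same route as the paper's own proof: the one-step Fenchel coupling inequality \eqref{Eq:aaooshhsggdhhdgdgffdgdggdd} applied to the signed increment $\gamma_{\tau}(\hat{v}_{\tau}-\beta_{\tau}\mathbf{A}^{\T}\Lambda_{\tau})$, substitution of the noise model \eqref{Eq:NoiseGrad} in the linear term, the same two-fold application of $\norm{a+b}_{*}^{2}\leq 2\norm{a}_{*}^{2}+2\norm{b}_{*}^{2}$ with the constants from \eqref{Eq:aoaoshshjhsjjss}, and telescoping. Your remark on the sign convention is apt, since the paper's update \eqref{Eq:GradStep2} displays a plus sign while its proof (and the lemma) use the minus-signed price term, exactly as you resolve it.
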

Since $X_{\tau}$ is not necessarily in $\mathcal{Q}$, it is not yet obvious that $X_{t}$ receive negative drift toward $x_{*}\in\SOL(v,\mathcal{Q})$ ($\SOL(v,\mathcal{Q})\neq\emptyset$ since $v$ is monotone and $\mathcal{Q}$ is compact) by observing the term $\inn{X_{\tau}-x_{*}}{v(X_{\tau})}$. In order to adress this issue, we express $\SOL(v,\mathcal{Q})$ equivalently in the higher space $\real^{D}\times\real^{R}$ by means of Lagrangian method. Usual KKT argumentation give the following statement:
\begin{proposition}
	\label{Prop:aiaisshshjdhdggddd}
	Suppose that Assumption \ref{Eq:aioaoosjjshhdjjddss} holds. Then the following statements are equivalent:
	\begin{enumerate}
		\item $\overline{x}\in\mathcal{Q}$ is a solution of $\VI(\mathcal{Q},v)$
		\item There exists $\overline{\lambda}\in\real^{R}_{\geq 0}$ s.t. $(\overline{x},\overline{\lambda})$ is a solution of $\VI(\mathcal{X}\times \real^{R}_{\geq 0},\tilde{v})$, where:
		 \begin{equation}
		 \label{Eq:iaaiisjsjdhhdhdhdhdjjsss8}
		 \tilde{v}:\mathcal{X}\times \real^{R}_{\geq 0}\rightarrow \real^{D+R},\quad (x,\lambda)\mapsto \left[\begin{array}{c}
		 v(x)-\mathbf{A}^{\T}\lambda\\
		 \mathbf{A}x- b
		 \end{array}
		 \right].
		 \end{equation}
	\end{enumerate}
\end{proposition}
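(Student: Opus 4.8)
The plan is to prove the two implications separately, exploiting the structure of the augmented operator $\tilde{v}$: it appends the constraint residual $\mathbf{A}x-b$ as a new coordinate block and subtracts the price feedback $\mathbf{A}^{\T}\lambda$ from $v$. The organizing observation is that, for a fixed reference point, the inequality $\langle x-\overline{x}, v(\overline{x})\rangle \leq 0$ for all $x\in\mathcal{Q}$ says exactly that $\overline{x}$ maximizes the linear functional $x\mapsto\langle x, v(\overline{x})\rangle$ over $\mathcal{Q}$. Statement (2) is then nothing but the saddle-point / KKT reformulation of this linear program, so the whole proposition reduces to Lagrangian duality together with elementary sign bookkeeping using $\overline{\lambda}\geq 0$ and $\mathbf{A}\overline{x}-b\leq 0$.

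For the direction (2)$\Rightarrow$(1), which needs no constraint qualification, I would first fix $x=\overline{x}$ and let $\lambda$ range over $\real^{R}_{\geq 0}$ in the augmented inequality, obtaining $\langle \lambda-\overline{\lambda}, \mathbf{A}\overline{x}-b\rangle\leq 0$. Letting a single coordinate of $\lambda$ tend to infinity forces $\mathbf{A}\overline{x}-b\leq 0$, so $\overline{x}\in\mathcal{Q}$, while the choice $\lambda=0$ combined with $\overline{\lambda}\geq 0$ yields complementary slackness $\langle \overline{\lambda}, \mathbf{A}\overline{x}-b\rangle=0$. Next, fixing $\lambda=\overline{\lambda}$ gives $\langle x-\overline{x}, v(\overline{x})-\mathbf{A}^{\T}\overline{\lambda}\rangle\leq 0$ for all $x\in\mathcal{X}$. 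For arbitrary $x\in\mathcal{Q}\subseteq\mathcal{X}$ I then decompose $\langle x-\overline{x}, v(\overline{x})\rangle$ into this term plus $\langle \mathbf{A}x-b, \overline{\lambda}\rangle-\langle \mathbf{A}\overline{x}-b, \overline{\lambda}\rangle$; the first summand is $\leq 0$, the second is $\leq 0$ by feasibility of $x$ and nonnegativity of $\overline{\lambda}$, and the third vanishes by complementary slackness, delivering the $\VI(\mathcal{Q},v)$ inequality.

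For the harder direction (1)$\Rightarrow$(2), I would invoke Lagrangian duality for the convex program $\max_{x\in\mathcal{X}}\langle x, v(\overline{x})\rangle$ subject to $\mathbf{A}x-b\leq 0$, with Lagrangian $L(x,\lambda)=\langle x, v(\overline{x})\rangle-\langle \lambda, \mathbf{A}x-b\rangle$. Since $\mathcal{X}$ is convex compact and the constraints are affine, Slater's condition (Assumption \ref{Eq:aioaoosjjshhdjjddss}, providing $x_{*}\in\text{relint}(\mathcal{X})$ with $\mathbf{A}x_{*}-b<0$) guarantees a saddle point $(\overline{x},\overline{\lambda})$ with $\overline{\lambda}\in\real^{R}_{\geq 0}$. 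The two saddle inequalities are precisely the two blocks of the augmented VI: $L(x,\overline{\lambda})\leq L(\overline{x},\overline{\lambda})$ simplifies to $\langle x-\overline{x}, v(\overline{x})-\mathbf{A}^{\T}\overline{\lambda}\rangle\leq 0$ for all $x\in\mathcal{X}$, while $L(\overline{x},\overline{\lambda})\leq L(\overline{x},\lambda)$ simplifies to $\langle \lambda-\overline{\lambda}, \mathbf{A}\overline{x}-b\rangle\leq 0$ for all $\lambda\in\real^{R}_{\geq 0}$. Adding these two yields $\langle (x,\lambda)-(\overline{x},\overline{\lambda}), \tilde{v}(\overline{x},\overline{\lambda})\rangle\leq 0$, which is exactly statement (2).

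The main obstacle is the passage to the saddle point itself. The delicate point is arguing that the relative-interior Slater point produces a genuine \emph{bounded} multiplier $\overline{\lambda}$ and an attained saddle, rather than merely a zero duality gap; this is where strict feasibility and compactness of $\mathcal{X}$ are essential, and I would cite a standard KKT/saddle-point theorem for convex programs under Slater's condition to close this gap. All remaining steps are the terse sign manipulations above, so I would keep those brief and concentrate the write-up on correctly invoking the multiplier-existence result.
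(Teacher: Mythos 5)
Your proof is correct, and while it rests on the same duality-theoretic core as the paper's argument, its execution is genuinely different. The paper first establishes a general equivalence between $\VI(\mathcal{Z},F)$ and a KKT system, applies it both to $\VI(\mathcal{X}\times\real^{R}_{\geq 0},\tilde{v})$ (where no constraint qualification is needed, the set having no explicit inequality constraints) and to $\VI(\mathcal{Q},v)$ (under Slater), and then checks that after eliminating the auxiliary multiplier for the constraint $\lambda\geq 0$ the two KKT systems coincide. You instead treat the two implications asymmetrically: the implication (2)$\Rightarrow$(1) you verify by direct algebra---block decomposition of the product VI, sending a coordinate of $\lambda$ to infinity to force $\mathbf{A}\overline{x}-b\leq 0$, taking $\lambda=0$ to obtain complementary slackness, and the three-term split of $\inn{x-\overline{x}}{v(\overline{x})}$---which makes explicit that this direction requires no constraint qualification at all; the implication (1)$\Rightarrow$(2) you obtain from the saddle-point theorem applied to the linearized program $\max_{x\in\mathcal{X}}\inn{x}{v(\overline{x})}$ subject to $\mathbf{A}x\leq b$, whose two saddle inequalities are precisely the two blocks of the product VI. What your route buys: the saddle-point formulation never writes first-order stationarity as the equation $v(x)-\nabla\phi(x)\lambda=0$ over the constrained domain (as the paper's appendix does), which strictly speaking is valid only at interior maximizers; the correct optimality condition over $\mathcal{X}$ is itself a variational inequality, and that is exactly the form your saddle inequality delivers. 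What the paper's route buys: an explicit KKT system whose dual point $\lambda_{*}$ is reused in the main text's energy-function analysis. The one step you must pin down in a final write-up is multiplier existence in (1)$\Rightarrow$(2): cite a standard saddle-point/Kuhn--Tucker theorem for concave programs with affine constraints under a relative-interior Slater point (e.g., Rockafellar, \emph{Convex Analysis}, \S 28); note that compactness of $\mathcal{X}$ is not actually needed for that step, since optimality of $\overline{x}$ for the linearized program is already guaranteed by hypothesis (1).
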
   
In order to benefit from above proposition, we now analyze the dynamic of the energy function $\norm{\Lambda_{t}-\lambda}_{2}^{2}$:
\begin{lemma}
	\label{Lem:aaosshdggdhsgsghdgdgddd}
	For all $t\in\nat$ and $\lambda\in\real^{R}_{\geq 0}$, it holds for $\mathcal{V}_{t}^{(2)}(\lambda):=(\norm{\Lambda_{t}-\lambda}_{2}^{2}-\norm{\Lambda_{0}-\lambda}_{2}^{2})$:
	\begin{align}
	\mathcal{V}_{t}^{(2)}(\lambda)&\leq \sum_{\tau=0}^{t-1}\zeta_{\tau}\inn{\Lambda_{\tau}-\lambda}{\mathbf{A}X_{\tau}-b}+\sum_{\tau=1}^{t-1}\left( 2\eta_{\tau}^{2}-\tfrac{\eta_{\tau}}{2}\right)\norm{\Lambda_{\tau}}^{2}_{2}+\norm{\lambda}_{2}^{2}\sum_{\tau=0}^{t-1} \tfrac{\eta_{\tau}}{2} +2C_{3}^{2}\sum_{\tau=0}^{t-1}\zeta_{\tau}^{2}.\label{Eq:aaiaiajssjshhdddd}
	\end{align}
\end{lemma}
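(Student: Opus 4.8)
The plan is to control $\mathcal{V}_t^{(2)}(\lambda)$ by a \emph{telescoping} argument on the per-step change of the dual energy. Writing $\mathcal{V}_t^{(2)}(\lambda)=\sum_{\tau=0}^{t-1}\big(\norm{\Lambda_{\tau+1}-\lambda}_2^2-\norm{\Lambda_\tau-\lambda}_2^2\big)$, it suffices to bound each increment. The starting observation is that the price update reads, in vector form, $\Lambda_{\tau+1}=[(1-\eta_\tau)\Lambda_\tau+\zeta_\tau(\mathbf{A}X_\tau-b)]_+=\Pi_{\real^R_{\geq 0}}(z_\tau)$ with $z_\tau:=(1-\eta_\tau)\Lambda_\tau+\zeta_\tau(\mathbf{A}X_\tau-b)$. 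Since the reference point $\lambda\in\real^R_{\geq 0}$ is a fixed point of the projection onto the nonnegative orthant, the nonexpansiveness of $\Pi_{\real^R_{\geq 0}}$ lets me drop the clipping, $\norm{\Lambda_{\tau+1}-\lambda}_2^2\le\norm{z_\tau-\lambda}_2^2$, or, if a sharper estimate is needed, its firm-nonexpansiveness (descent) form $\norm{\Lambda_{\tau+1}-\lambda}_2^2\le\norm{z_\tau-\lambda}_2^2-\norm{\Lambda_{\tau+1}-z_\tau}_2^2$.

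Next I would expand the quadratic $\norm{z_\tau-\lambda}_2^2$ and subtract $\norm{\Lambda_\tau-\lambda}_2^2$. Grouping the result around the damping factor $(1-\eta_\tau)$ produces four kinds of terms: the \emph{useful drift} $\propto\zeta_\tau\inn{\Lambda_\tau-\lambda}{\mathbf{A}X_\tau-b}$ that I want to keep (it will later be paired, via Proposition~\ref{Prop:aiaisshshjdhdggddd}, with the primal drift of Lemma~\ref{Lem:aaosshdjhdgdhhsgsgss}); a \emph{regularization} block coming from the $-\eta_\tau\Lambda_\tau$ part, of the form $(\eta_\tau^2-2\eta_\tau)\norm{\Lambda_\tau}_2^2+2\eta_\tau\inn{\Lambda_\tau}{\lambda}$; a \emph{mixed} term $-\eta_\tau\zeta_\tau\inn{\Lambda_\tau}{\mathbf{A}X_\tau-b}$; and the \emph{step-size} term $\zeta_\tau^2\norm{\mathbf{A}X_\tau-b}_2^2$. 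The bounded-gradient constant $\norm{g(x)}_2=\norm{\mathbf{A}x-b}_2\le C_3$ from \eqref{Eq:aoaoshshjhsjjss} turns the step-size term and, after a Young split, the mixed term into contributions of size $\zeta_\tau^2C_3^2$, which sum to the advertised $2C_3^2\sum_\tau\zeta_\tau^2$; the $\norm{\Lambda_\tau}_2^2$ produced by that same Young step supplies one of the two $\eta_\tau^2$ contributions, the other coming directly from $(1-\eta_\tau)^2-1=\eta_\tau^2-2\eta_\tau$.

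The delicate part is converting the regularization block into exactly $(2\eta_\tau^2-\tfrac{\eta_\tau}{2})\norm{\Lambda_\tau}_2^2+\tfrac{\eta_\tau}{2}\norm{\lambda}_2^2$. Applying Young's inequality to $2\eta_\tau\inn{\Lambda_\tau}{\lambda}$ with the weight dictated by the target $\tfrac{\eta_\tau}{2}\norm{\lambda}_2^2$ forces an extra $2\eta_\tau\norm{\Lambda_\tau}_2^2$, so a naive nonexpansive-plus-Young computation yields only $2\eta_\tau^2\norm{\Lambda_\tau}_2^2$ and a drift coefficient of $2\zeta_\tau$. \textbf{The main obstacle is therefore reconciling these numerical factors with the statement}: the sharp negative correction $-\tfrac{\eta_\tau}{2}\norm{\Lambda_\tau}_2^2$, the shifted range $\sum_{\tau=1}^{t-1}$, and the coefficient $\zeta_\tau$ (rather than $2\zeta_\tau$) on the drift must be extracted from the additional structure of the projected dual step — either the firm-nonexpansiveness descent term $-\norm{\Lambda_{\tau+1}-z_\tau}_2^2$, the unused contraction gain of the damping $(1-\eta_\tau)$, or the $\tfrac12$-normalization of the dual energy — exactly as in the long-term-constraint online analysis of \cite{Mahdavi1}. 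This is the one place where coefficient bookkeeping, not the overall strategy, is genuinely subtle.

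Finally, I would sum the per-step bounds over $\tau=0,\dots,t-1$: the left-hand side telescopes to $\mathcal{V}_t^{(2)}(\lambda)$, and collecting the four families of terms yields \eqref{Eq:aaiaiajssjshhdddd}. I emphasize that no expectation or filtration argument enters here, since the price recursion is deterministic given the iterates $X_\tau$ (the prices being noiseless), so Lemma~\ref{Lem:aaosshdggdhsgsghdgdgddd} is a purely pathwise estimate, in contrast to the martingale handling required for the primal energy in Lemma~\ref{Lem:aaosshdjhdgdhhsgsgss}.
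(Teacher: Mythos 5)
Your strategy coincides with the paper's own proof: telescope the per-step increments, drop the clipping via nonexpansiveness of $\Pi_{\real^{R}_{\geq 0}}$ (using that $\lambda\in\real^{R}_{\geq 0}$ is a fixed point of the projection), expand the square, bound $\norm{\mathbf{A}X_{\tau}-b}_{2}\leq C_{3}$ together with $(a+b)^{2}\leq 2(a^{2}+b^{2})$, and control the cross term $-2\eta_{\tau}\inn{\Lambda_{\tau}-\lambda}{\Lambda_{\tau}}$. The paper does this last step with the exact identity $-2\inn{\Lambda_{\tau}-\lambda}{\Lambda_{\tau}}=\norm{\lambda}_{2}^{2}-\norm{\Lambda_{\tau}}_{2}^{2}-\norm{\Lambda_{\tau}-\lambda}_{2}^{2}\leq\norm{\lambda}_{2}^{2}-\norm{\Lambda_{\tau}}_{2}^{2}$, which gives exactly the same bound as your Young step, and your remark that the estimate is purely pathwise (no filtration or expectation argument) also matches the paper.

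Where your proposal goes astray is in the final paragraph: you hope that some hidden mechanism (firm nonexpansiveness, the contraction gain of $(1-\eta_{\tau})$, or a $\tfrac{1}{2}$-normalization) reconciles your computation with the printed coefficients. No such mechanism exists, and the paper does not use one: its proof simply drops factors of $2$. After expanding the square it records $\zeta_{\tau}\inn{\Lambda_{\tau}-\lambda}{\mathbf{A}X_{\tau}-b}$ where the algebra produces $2\zeta_{\tau}\inn{\Lambda_{\tau}-\lambda}{\mathbf{A}X_{\tau}-b}$, and it converts the cross term into $-\tfrac{\eta_{\tau}}{2}\norm{\Lambda_{\tau}}_{2}^{2}+\tfrac{\eta_{\tau}}{2}\norm{\lambda}_{2}^{2}$ where its own identity gives $-\eta_{\tau}\norm{\Lambda_{\tau}}_{2}^{2}+\eta_{\tau}\norm{\lambda}_{2}^{2}$, while keeping the unhalved $2\eta_{\tau}^{2}$ and $2C_{3}^{2}$. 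Concretely: firm nonexpansiveness only subtracts the nonnegative quantity $\norm{\Lambda_{\tau+1}-z_{\tau}}_{2}^{2}$ and cannot alter the coefficient of the cross term; regrouping around $(1-\eta_{\tau})$ still yields $2\zeta_{\tau}$; and the $\tfrac{1}{2}$-normalization would explain the $\zeta_{\tau}$ and $\tfrac{\eta_{\tau}}{2}$ coefficients, but the lemma defines $\mathcal{V}_{t}^{(2)}$ without the $\tfrac{1}{2}$ (and then $2\eta_{\tau}^{2}$, $2C_{3}^{2}$ would have to read $\eta_{\tau}^{2}$, $C_{3}^{2}$). In fact \eqref{Eq:aaiaiajssjshhdddd} is false as printed: take $R=1$, $t=1$, $\lambda=0$, $\eta_{0}=0$, $\zeta_{0}=1$, $\mathbf{A}X_{0}-b=C_{3}=1$, $\Lambda_{0}=2$; no clipping occurs, the left side is $(\Lambda_{0}+1)^{2}-\Lambda_{0}^{2}=5$, while the right side is $\zeta_{0}\Lambda_{0}+2C_{3}^{2}\zeta_{0}^{2}=4$. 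So finish your computation with the honest constants, namely $\mathcal{V}_{t}^{(2)}(\lambda)\leq 2\sum_{\tau=0}^{t-1}\zeta_{\tau}\inn{\Lambda_{\tau}-\lambda}{\mathbf{A}X_{\tau}-b}+\sum_{\tau=0}^{t-1}\left(2\eta_{\tau}^{2}-\eta_{\tau}\right)\norm{\Lambda_{\tau}}_{2}^{2}+\norm{\lambda}_{2}^{2}\sum_{\tau=0}^{t-1}\eta_{\tau}+2C_{3}^{2}\sum_{\tau=0}^{t-1}\zeta_{\tau}^{2}$ (equivalently, the halved set of constants for $\tfrac{1}{2}\mathcal{V}_{t}^{(2)}$, consistent with the energy $\mathcal{E}_{t}^{2}$ used in the theorem), and note that downstream only constants are affected: the pairing of the dual drift with the primal term from Lemma \ref{Lem:aaosshdjhdgdhhsgsgss} and the trackability condition survive with correspondingly rescaled $\beta_{\tau}$ and coefficients.
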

Combining the previous bounds for $\mathcal{V}_{t}^{(1)}(x)$ and $\mathcal{V}_{t}^{(2)}(\lambda)$, it yields for $z=(x,\lambda)\in\mathcal{\X}\times\Lambda$:
\begin{align*}
\mathcal{V}_{t}(z)\leq&-\sum_{\tau=0}^{t-1}\gamma_{\tau}\Theta_{\tau}(z)+\sum_{\tau=0}^{t-1}\gamma_{\tau}(1-\beta_{\tau})\inn{X_{\tau}-x}{\mathbf{A}^{\T}\Lambda_{\tau}}+\tfrac{2C_{2}^{2}}{K}\sum_{\tau=1}^{t-1}\gamma^{2}_{\tau}+2C_{3}^{2}\sum_{\tau=1}^{t-1}\zeta^{2}_{\tau}\\
&+\norm{\lambda}_{2}^{2}\sum_{\tau=0}^{t-1}\tfrac{\eta_{\tau}}{2}
+ \sum_{\tau=0}^{t-1}\left( 2\eta_{\tau}^{2}-\frac{\eta_{\tau}}{2}+\frac{\beta_{\tau}^{2}\gamma_{\tau}^{2}C_{1}^{2}}{K}\right)\norm{\Lambda_{\tau}}^{2}_{2}
+S_{t}(x)+\tfrac{2}{K}R_{t}\\
&+\sum_{\tau=0}^{t-1}(\gamma_{\tau}-\zeta_{\tau})\inn{\Lambda_{\tau}-\lambda}{\mathbf{A}X_{\tau}-b}\nonumber,
\end{align*}
where $\mathcal{V}_{t}(z)=\mathcal{V}_{t}^{(1)}(x)+\mathcal{V}_{t}^{(2)}(\lambda)$, and where:
\begin{equation*}
\Theta_{t}(z):=\inn{z-Z_{t}}{\tilde{v}(Z_{t})}.
\end{equation*}
By straightforward computation one can show that $v$ monotone asserts $\tilde{v}$ is monotone. Thus it holds:
\begin{equation*}
\Theta_{t}(z)\geq\inn{z_{*}-Z_{t}}{\tilde{v}(z_{*})}\geq 0,\quad\forall z_{*}\in\SOL(\mathcal{X}\times\real^{R}_{\geq 0},\tilde{v}).
\end{equation*}
This and the choice $\zeta_{\tau}=\gamma_{\tau}$ yields for $z_{*}=(x_{*},\lambda_{*})\in \SOL(\mathcal{X}\times\real^{R}_{\geq 0},\tilde{v})$ :
\begin{equation}
\label{Eq:aaosjsjshdhfggfhhdggddhhsssaa}
\begin{split}
\mathcal{V}_{t}(z_{*})\leq&\sum_{\tau=0}^{t-1}\gamma_{\tau}(1-\beta_{\tau})\inn{X_{\tau}-x_{*}}{\mathbf{A}^{\T}\Lambda_{\tau}}+\tfrac{2C_{2}^{2}}{K}\sum_{\tau=1}^{t-1}\gamma^{2}_{\tau}+2C_{3}^{2}\sum_{\tau=1}^{t-1}\zeta^{2}_{\tau}\\
&+\norm{\lambda_{*}}_{2}^{2}\sum_{\tau=0}^{t-1}\tfrac{\eta_{\tau}}{2}
+ \sum_{\tau=0}^{t-1}\left( 2\eta_{\tau}^{2}-\frac{\eta_{\tau}}{2}+\frac{\beta_{\tau}^{2}\gamma_{\tau}^{2}C_{1}^{2}}{K}\right)\norm{\Lambda_{\tau}}^{2}_{2}
+S_{t}(x_{*})+\tfrac{2}{K}R_{t}
\end{split}
\end{equation}
To eliminate the first summand in above bound, we continue:
\begin{lemma}
	\label{Lem:aiaishsjdhhdjjsjsss}
It holds for all $\lambda\geq 0$:
\begin{equation*}
\inn{\Lambda_{\tau}-\lambda}{\mathbf{A}X_{\tau}-b}\leq \inn{X_{\tau}-\tilde{x}}{\mathbf{A}^{\T}\Lambda_{\tau}}-\inn{\lambda}{\mathbf{A}X_{\tau}-b},
\end{equation*}
where $\tilde{x}\in\mathcal{Q}$ arbitrary.
\end{lemma}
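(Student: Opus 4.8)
The plan is to reduce the claimed inequality to the elementary fact that the price vector $\Lambda_{\tau}$ is non-negative and that $\tilde{x}$ lies in the feasible set $\mathcal{Q}$. First I would note that the summand $-\inn{\lambda}{\mathbf{A}X_{\tau}-b}$ appears verbatim on both sides of the asserted bound, so by bilinearity of the pairing it suffices to prove
\[
\inn{\Lambda_{\tau}}{\mathbf{A}X_{\tau}-b}\leq \inn{X_{\tau}-\tilde{x}}{\mathbf{A}^{\T}\Lambda_{\tau}}.
\]
Invoking the adjoint relation $\inn{X_{\tau}-\tilde{x}}{\mathbf{A}^{\T}\Lambda_{\tau}}=\inn{\mathbf{A}(X_{\tau}-\tilde{x})}{\Lambda_{\tau}}$ and subtracting, this is in turn equivalent to the single scalar inequality $\inn{b-\mathbf{A}\tilde{x}}{\Lambda_{\tau}}\geq 0$, since $\inn{\mathbf{A}(X_{\tau}-\tilde{x})}{\Lambda_{\tau}}-\inn{\mathbf{A}X_{\tau}-b}{\Lambda_{\tau}}=\inn{b-\mathbf{A}\tilde{x}}{\Lambda_{\tau}}$.

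Next I would verify the two ingredients that make this residual pairing non-negative. Because $\tilde{x}\in\mathcal{Q}=\mathcal{C}\cap\mathcal{X}$ with $\mathcal{C}=\{\phi(x)\leq 0\}$ and $\phi(x)=\mathbf{A}x-b$, we have $\mathbf{A}\tilde{x}-b\leq 0$ componentwise, i.e.\ $b-\mathbf{A}\tilde{x}\geq 0$. The price iterate satisfies $\Lambda_{\tau}\geq 0$ in the non-negative orthant: this follows by an immediate induction from the update rule $\Lambda_{t+1}^{r}\leftarrow\left[(1-\eta_{t})\Lambda_{t}^{r}+\zeta_{t}\phi^{r}_{t}\right]_{+}$ in Algorithm~\ref{Alg:aoaishhjddhhddddeee}, whose clipping projects onto $\real_{\geq 0}$, together with the non-negative initialization $\Lambda_{0}^{r}\in\real_{\geq 0}$. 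Since both $b-\mathbf{A}\tilde{x}$ and $\Lambda_{\tau}$ lie in $\real^{R}_{\geq 0}$, their Euclidean inner product is a sum of non-negative products, hence $\inn{b-\mathbf{A}\tilde{x}}{\Lambda_{\tau}}\geq 0$, which closes the argument.

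I do not anticipate a serious obstacle here: the substance of the lemma is the bookkeeping of the cancellation and the observation that the leftover term is the pairing of two vectors confined to the non-negative orthant. The only point demanding genuine care is the explicit justification that $\Lambda_{\tau}\geq 0$ as an invariant of the mechanism; I would record this either as a one-line preliminary remark before the lemma or as a standing invariant of Algorithm~\ref{Alg:aoaishhjddhhddddeee}, so that it need not be re-derived at each later invocation.
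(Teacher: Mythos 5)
Your proof is correct and follows essentially the same route as the paper: both arguments cancel the common term $-\inn{\lambda}{\mathbf{A}X_{\tau}-b}$, pass $\mathbf{A}$ to the adjoint to isolate the residual $\inn{\Lambda_{\tau}}{\mathbf{A}\tilde{x}-b}$, and conclude from $\Lambda_{\tau}\geq 0$ together with $\mathbf{A}\tilde{x}-b\leq 0$ for $\tilde{x}\in\mathcal{Q}$. Your explicit inductive justification that the clipping $[\cdot]_{+}$ in Algorithm~\ref{Alg:aoaishhjddhhddddeee} keeps $\Lambda_{\tau}$ in the non-negative orthant is a welcome addition that the paper leaves implicit, but it does not change the substance of the argument.
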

	Setting this observation into \eqref{Eq:aaiaiajssjshhdddd} and setting the choice $\zeta_{\tau}=\gamma_{\tau}$, it yields for any $\tilde{x}\in\mathcal{Q}$:
\begin{align}
\mathcal{V}_{t}^{(2)}(\lambda)&\leq \sum_{\tau=0}^{t-1}\gamma_{\tau}\inn{X_{\tau}-\tilde{x}}{\mathbf{A}^{\T}\Lambda_{\tau}}-\sum_{\tau=0}^{t-1}\gamma_{\tau}\inn{\lambda}{\mathbf{A}X_{\tau}-b} \nonumber\\ &~~~+\sum_{\tau=1}^{t-1}\left( 2\eta_{\tau}^{2}-\tfrac{\eta_{\tau}}{2}\right)\norm{\Lambda_{\tau}}^{2}_{2}+\norm{\lambda}_{2}^{2}\sum_{\tau=0}^{t-1} \tfrac{\eta_{\tau}}{2} +2C_{3}^{2}\sum_{\tau=0}^{t-1}\gamma_{\tau}^{2}
\nonumber
\end{align}

For $z_{*}=(x_{*},\lambda_{*})$ $\lambda_{*}\geq 0$, and for $\tilde{\lambda}\geq 0$, it holds by combining above inequality (with $\tilde{x}=x_{*}$) and \eqref{Eq:aaosjsjshdhfggfhhdggddhhsssaa}:
	\begin{align*}
	\mathcal{V}_{t}(z_{*})+\mathcal{V}_{t}^{(2)}(\tilde{\lambda})\leq
	&-\left( \inn{\tilde{\lambda}}{\sum_{\tau=0}^{t-1}\gamma_{\tau}\left( \mathbf{A}X_{\tau}-b\right) }-\frac{\eta t}{2}\norm{\tilde{\lambda}}^{2}_{2} \right) \nonumber\\
	&+\sum_{\tau=0}^{t-1}\gamma_{\tau} (2-\beta_{\tau})\inn{X_{\tau}-x}{\mathbf{A}^{\T}\Lambda_{\tau}}+\left( 4\eta_{\tau}^{2}-\eta_{\tau}+\frac{\beta_{\tau}^{2}\gamma_{\tau}^{2}C_{1}^{2}}{K}\right) \sum_{\tau=0}^{t-1}\norm{\Lambda_{\tau}}_{2}^{2}
	\\
	&+2\left(\tfrac{C_{2}^{2}}{K}+2C_{3}^{2}\right)\sum_{\tau=0}^{t-1}\gamma_{\tau}^{2}+\norm{\lambda_{*}}^{2}_{2}\sum_{\tau=0}^{t-1}\tfrac{\eta_{\tau} }{2}+ S_{t}(x_{*})+\tfrac{2}{K}R_{t}.
	\end{align*}
The second summand is eliminated by the choice $\beta_{\tau}=2$ and the third summand by the trackability condition. Thus it follows:
\begin{equation}
\label{Eq:aooajsjdhdhdhdggdffsggsss}
\begin{split}
\mathcal{V}_{t}(z_{*})+\mathcal{V}_{t}^{(2)}(\tilde{\lambda})\leq
&-\left( \inn{\tilde{\lambda}}{\sum_{\tau=0}^{t-1}\gamma_{\tau}\left( \mathbf{A}X_{\tau}-b\right) }-\frac{\eta t}{2}\norm{\tilde{\lambda}}^{2}_{2} \right) +2\left(\tfrac{C_{2}^{2}}{K}+2C_{3}^{2}\right)\sum_{\tau=0}^{t-1}\gamma_{\tau}^{2}\\
&+\norm{\lambda_{*}}^{2}_{2}\sum_{\tau=0}^{t-1}\tfrac{\eta_{\tau} }{2}+ S_{t}(x_{*})+\tfrac{2}{K}R_{t}.
\end{split}
\end{equation}

Now, since $\Lambda_{0}=0$, one sees that $\mathcal{V}_{t}^{(2)}(\tilde{\lambda})\geq -\norm{\tilde{\lambda}}^{2}_{2}/2$.
Moreover since $\Lambda_{0}=0$ and $Y_{0}=0$, we have $\mathcal{V}_{t}(z_{*})\geq -\Delta\psi(\X) -(\norm{\lambda_{*}}^{2}_{2}/2)$.
	Combining those observations with \eqref{Eq:aooajsjdhdhdhdggdffsggsss}, we obtain:
\begin{equation}
\label{Eq:aiaisjsjhdhdhddgdgddd}
\begin{split}
\left[ \inn{\tilde{\lambda}}{\sum_{\tau=0}^{t-1}\gamma_{\tau}(\mathbf{A}X_{\tau}-b)}-\tfrac{\sum_{\tau=0}^{t-1}\eta_{\tau} +1}{2}\norm{\tilde{\lambda}}_{2}^{2} \right] 
\leq &\Delta\psi(\X)
+2\left( \tfrac{C_{2}^{2}}{K}+C_{3}^{2}\right) \sum_{\tau=1}^{t-1}\gamma^{2}_{\tau}\\
&+\tfrac{(\sum_{\tau=0}^{t-1}\eta_{\tau} +1)}{2}\norm{\lambda_{*}}^{2}_{2}+ S_{t}(x_{*})+\tfrac{2}{K}R_{t}.
\end{split}
\end{equation}
Since:
\begin{equation*}
\sup_{\tilde{\lambda} \geq 0}\left(\inn{\tilde{\lambda}}{\sum_{\tau=0}^{t-1}\gamma_{\tau}\left( \mathbf{A}X_{\tau}-b\right) }-\frac{\sum_{\tau=0}^{t-1}\eta_{\tau}+1 }{2}\norm{\tilde{\lambda}}^{2}_{2} \right)=\tfrac{1}{2(\sum_{\tau=0}^{t-1}\eta_{\tau}+1)}\left\| \left[ \sum_{\tau=0}^{t-1}\gamma_{\tau}\left( \mathbf{A}X_{\tau}-b\right) \right]_{+}\right\|_{2}^{2}
\end{equation*}
Setting the optimizing $\tilde{\lambda}\geq 0$ into \eqref{Eq:aiaisjsjhdhdhddgdgddd}, taking the expectation of the resulted inequality, and noticing that $\Erw[S_{n}(x_{*})]=0$, since $S_{n}(x_{*})$ is a martingale with $\Erw[S_{1}(x_{*})]=0$, we obtain the desired statement with $\tilde{C}_{2}>0$ a constant satisfying $\norm{\lambda_{*}}_{2}^{2}$

\section{Proofs}
\begin{proof}[Proof of Lemma \ref{Lem:aaosshdjhdgdhhsgsgss}]
	Let be $\tau\in [t]_{0}$.
	By means of the bound for the Fenchel coupling of the one step difference given in \eqref{Eq:aaooshhsggdhhdgdgffdgdggdd}, we obtain:
	\begin{align*}
	F(x,Y_{\tau+1})&\leq F(x,Y_{\tau})+\gamma_{\tau}\inn{X_{\tau}-x}{v(X_{\tau})-\beta_{\tau}\mathbf{A}^{\T}\Lambda_{\tau}}+\gamma_{\tau}\inn{X_{\tau}-x}{\xi_{\tau+1}}\\
	&~~~+\frac{1}{2K}\norm{\gamma_{\tau} (v(X_{\tau})+\xi_{\tau+1})-\gamma_{\tau}\beta_{\tau}\mathbf{A}^{\T}\Lambda_{\tau}}_{*}^{2}.
	\end{align*}
	Triangle inequality and the inequality $(a+b)^{2}\leq 2 (a^{2}+b^{2})$ gives:
	\begin{align*}
	\norm{\gamma_{\tau} (v(X_{\tau})+\xi_{\tau+1})-\gamma_{\tau}\beta_{\tau}\mathbf{A}^{\T}\Lambda_{\tau}}_{*}^{2}&\leq
	2\gamma^{2}_{\tau}\norm{ (v(X_{\tau})+\xi_{\tau+1})}^{2}+2\gamma_{\tau}^{2}\beta_{\tau}^{2}\norm{\mathbf{A}^{\T}\Lambda_{\tau}}_{*}^{2}\\
	&\leq 4\gamma^{2}_{\tau}\norm{ v(X_{\tau})}^{2}+4\gamma^{2}_{\tau}\norm{\xi_{\tau+1}}^{2}+2\gamma_{\tau}^{2}\beta_{\tau}^{2}\norm{\mathbf{A}^{\T}\Lambda_{\tau}}_{*}^{2}\\
	&\leq 4\gamma^{2}_{\tau}C_{2}^{2}+4\gamma^{2}_{\tau}\norm{\xi_{\tau+1}}^{2}+2\gamma_{\tau}^{2}\beta_{\tau}^{2}C_{1}^{2}\norm{\Lambda_{\tau}}_{2}^{2}
	\end{align*}

	inserting the given iterate at time $\tau+1$, and applying triangle inequality, it holds
	\begin{align*}
	F(x,Y_{\tau+1})-F(x,Y_{\tau})&\leq\gamma_{\tau}\inn{X_{\tau}-x}{v(X_{\tau})-\beta_{\tau}\mathbf{A}^{\T}\Lambda_{\tau}}+\gamma_{\tau}\inn{X_{\tau}-x}{\xi_{\tau+1}}\\
	&~~~+\frac{\gamma_{\tau}^{2}}{K}\left(C_{1}^{2}\beta_{\tau}^{2}\norm{\Lambda_{\tau}}_{2}^{2}+2(C_{2}^{2}+\norm{\xi_{\tau+1}}_{*}^{2})\right).
	\end{align*}
	Summing above over all $\tau=0,\ldots,t-1$ and subsequent telescoping, we obtain an upper bound
\end{proof}
\begin{proof}[Proof of \ref{Lem:aaosshdggdhsgsghdgdgddd}]
	Let be $\tau\in [t]_{0}$	
	\begin{equation}
	\begin{split}
	&\norm{\Lambda_{\tau+1}-\lambda}^{2}_{2}=\norm{\Pi_{\real^{R}_{\geq 0}}((1-\eta_{\tau})\Lambda_{\tau}+\zeta_{\tau}[\mathbf{A}X_{\tau}-b])-\lambda}_{2}^{2}\\
	&\leq
	\norm{(1-\eta_{\tau})\Lambda_{\tau}+\zeta_{\tau}[\mathbf{A}X_{\tau}-b]-\lambda}_{2}^{2}=\norm{\Lambda_{\tau}-\lambda-\eta_{\tau}\Lambda_{\tau}+\zeta_{\tau}[\mathbf{A}X_{\tau}-b]}^{2}_{2}\\
	&=
	\norm{\Lambda_{\tau}-\lambda}^{2}_{2}
	+2\inn{\Lambda_{\tau}-\lambda}{-\eta_{\tau}\Lambda_{\tau}+\zeta_{\tau}[\mathbf{A}X_{\tau}-b]}+\norm{-\eta_{\tau}\Lambda_{\tau}+\zeta_{\tau}[\mathbf{A}X_{\tau}-b]}_{2}^{2}\\
	&=\norm{\Lambda_{\tau}-\lambda}^{2}_{2}
	-2\eta_{\tau}\inn{\Lambda_{\tau}-\lambda}{\Lambda_{\tau}}+\zeta_{\tau}\inn{\Lambda_{\tau}-\lambda}{\mathbf{A}X_{\tau}-b}+\norm{-\eta_{\tau}\Lambda_{\tau}+\zeta_{\tau}[\mathbf{A}X_{\tau}-b]}_{2}^{2}
	\end{split}
	\end{equation}
	Now, triangle inequality and the inequality $(a+b)^{2}\leq 2 (a^{2}+b^{2})$ gives:
	\begin{equation*}
	\norm{-\eta_{\tau}\Lambda_{\tau}+\zeta_{\tau}[\mathbf{A}X_{\tau}-b]}_{2}^{2}\leq 2\eta_{\tau}^{2}\norm{\Lambda_{\tau}}^{2}_{2}+2\zeta_{\tau}^{2}C_{3}^{2}.
	\end{equation*}
	Moreover, we have:
	\begin{equation*}
	\norm{\lambda}_{2}^{2}=\norm{\Lambda_{\tau}-\lambda-\Lambda_{\tau}}_{2}^{2}=\norm{\Lambda_{\tau}-\lambda}_{2}^{2}-2\inn{\Lambda_{\tau}-\lambda}{\Lambda_{\tau}}+\norm{\Lambda_{\tau}}_{2}^{2},
	\end{equation*}
	and consequently:
	\begin{equation*}
	-2\inn{\Lambda_{\tau}-\lambda}{\Lambda_{\tau}}= \norm{\lambda}_{2}^{2}-\norm{\Lambda_{\tau}}_{2}^{2}-\norm{\Lambda_{\tau}-\lambda}_{2}^{2}\leq\norm{\lambda}_{2}^{2}-\norm{\Lambda_{\tau}}_{2}^{2}
	\end{equation*}
	Combining all the observations, we obtain:
	\begin{align*}
	\norm{\Lambda_{\tau+1}-\lambda}^{2}_{2}-\norm{\Lambda_{\tau+1}-\lambda}^{2}_{2}&\leq\zeta_{\tau}\inn{\Lambda_{\tau}-\lambda}{\mathbf{A}X_{\tau}-b}+\left( 2\eta_{\tau}^{2}-\tfrac{\eta_{\tau}}{2}\right)\norm{\Lambda_{\tau}}^{2}_{2}+ \tfrac{\eta_{\tau}}{2}\norm{\lambda}_{2}^{2} +2\zeta_{\tau}^{2}C_{3}^{2}.
	\end{align*}
	Summing above over all $\tau=0,\ldots,t-1$ and subsequent telescoping, we obtain the desired upper bound.
\end{proof}
\begin{proof}[Proof of Lemma \ref{Lem:aiaishsjdhhdjjsjsss}]
	we have for any $x\in \mathcal{X}$:
	\begin{align}
	\inn{\Lambda_{\tau}-\lambda}{\mathbf{A}X_{\tau}-b}&=\inn{\Lambda_{\tau}}{\mathbf{A}X_{\tau}-b}-\inn{\lambda}{\mathbf{A}X_{\tau}-b}\nonumber\\
	&=\inn{\Lambda_{\tau}}{\mathbf{A}X_{\tau}-\mathbf{A}x}+\inn{\Lambda_{\tau}}{\mathbf{A}x-b}-\inn{\lambda}{\mathbf{A}X_{\tau}-b}\nonumber\\
	&=
	\inn{X_{\tau}-x}{\mathbf{A}^{\T}\Lambda_{\tau}}+ \inn{\Lambda_{\tau}}{\mathbf{A}x-b}-\inn{\lambda}{\mathbf{A}X_{\tau}-b}.\label{Eq:aaiishdggdhhsgsgsdd}
	\end{align}
	Now, since $\Lambda_{\tau}\geq 0$, it follows that $\inn{\Lambda_{\tau}}{\mathbf{A}\tilde{x}-b}\leq 0$, for $\tilde{x}\in\mathcal{Q}$. Therefore, if we set $x=\tilde{x}$ with $x_{*}\in\mathcal{Q}$ in \eqref{Eq:aaiishdggdhhsgsgsdd}, we have from previous observation the desired statement.	
\end{proof}
\section{Mirror Map and Fenchel Coupling}
\begin{proposition}
	\label{Prop:aaisshhfjffjfjfff}
	Let $F$ be the Fenchel coupling induced by a $K$-strongly convex regularizer on a compact convex subset $\mathcal{Z}$ of a Euclidean normed space $\V$. For $p\in\mathcal{
		Z}$, $y,y^{'}\in \V^{*}$, we have:
\begin{align}
F(p,y)&\geq (K/2)\norm{\Phi(y)-p}^{2}\\
F(p,y^{'})&\leq F(p,y)+\inn{\Phi(y)-p}{y^{'}-y}+(1/2K)\norm{y^{'}-y}^{2}_{*}
\label{Eq:aaooshhsggdhhdgdgffdgdggdd}
\end{align}
\end{proposition}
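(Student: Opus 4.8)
The plan is to work from the standard definition of the Fenchel coupling, namely $F(p,y)=\psi(p)-\inn{y}{p}+\psi^{*}(y)$, where $\psi^{*}(y)=\max_{x\in\Z}\{\inn{y}{x}-\psi(x)\}=\inn{y}{\Phi(y)}-\psi(\Phi(y))$ is the convex conjugate of $\psi$ (extended to $+\infty$ outside $\Z$) and $\Phi(y)$ is the unique maximizer guaranteed by $K$-strong convexity. The two asserted bounds are then exactly the two halves of the classical strong-convexity/smoothness duality between $\psi$ and $\psi^{*}$, and I would prove each separately.

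For the first inequality I would set $g(x):=\psi(x)-\inn{y}{x}$, which is $K$-strongly convex on $\Z$ and is minimized over $\Z$ precisely at $x^{*}:=\Phi(y)$ (minimizing $g$ is the same as maximizing $\inn{y}{\cdot}-\psi$). First-order optimality at $x^{*}$ supplies a subgradient $s\in\partial g(x^{*})$ with $\inn{s}{p-x^{*}}\geq 0$ for every $p\in\Z$; feeding this into the strong-convexity lower bound $g(p)\geq g(x^{*})+\inn{s}{p-x^{*}}+\tfrac{K}{2}\norm{p-x^{*}}^{2}$ eliminates the linear term and yields $g(p)-g(x^{*})\geq\tfrac{K}{2}\norm{p-x^{*}}^{2}$. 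Since $g(p)-g(x^{*})=\psi(p)-\inn{y}{p}+\psi^{*}(y)=F(p,y)$ by the definition of $\psi^{*}$, the claim $F(p,y)\geq\tfrac{K}{2}\norm{\Phi(y)-p}^{2}$ follows at once.

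For the second inequality I would first rewrite the difference: since the $\psi(p)$ terms cancel, $F(p,y^{'})-F(p,y)=\psi^{*}(y^{'})-\psi^{*}(y)-\inn{y^{'}-y}{p}$. The remaining work is to control $\psi^{*}(y^{'})-\psi^{*}(y)$, and here I would invoke the duality between $K$-strong convexity of $\psi$ and $(1/K)$-smoothness of $\psi^{*}$ (see e.g. \cite{Shalev-Shwartz2012}): $\psi^{*}$ is differentiable with $\nabla\psi^{*}(y)=\Phi(y)$ and its gradient is $(1/K)$-Lipschitz in the dual norm. The descent lemma for a $(1/K)$-smooth function then gives $\psi^{*}(y^{'})\leq\psi^{*}(y)+\inn{\Phi(y)}{y^{'}-y}+\tfrac{1}{2K}\norm{y^{'}-y}_{*}^{2}$. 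Substituting this into the rewritten difference and combining the two inner products into $\inn{\Phi(y)-p}{y^{'}-y}$ produces exactly the asserted bound.

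The two algebraic substitutions are routine; the one step that genuinely needs care is the conjugate-duality input behind the second inequality — establishing that $\psi^{*}$ is everywhere finite (which uses compactness of $\Z$), differentiable with $\nabla\psi^{*}=\Phi$, and $(1/K)$-smooth. This is the conjugate characterization of strong convexity, and the only real subtlety is that $\psi$ is defined only on the constrained set $\Z$; I would therefore phrase the duality for the extended-valued regularizer $\psi+\iota_{\Z}$ and appeal to the standard result rather than reprove it from scratch.
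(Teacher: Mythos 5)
Your proof is correct and is essentially the standard conjugate-duality argument: the paper itself offers no proof of this proposition, deferring to Proposition 4.3(c) of \cite{Mertikopoulos2018}, whose proof proceeds exactly as you propose --- strong convexity of $g(\cdot)=\psi(\cdot)-\inn{y}{\cdot}$ at its constrained maximin point $\Phi(y)$ for the lower bound, and $(1/K)$-smoothness of $\psi^{*}$ together with $\nabla\psi^{*}=\Phi$ and the descent lemma for the upper bound. One minor simplification for your first step: at the constrained minimizer you may simply take the subgradient $s=0$ of the extended function $g+\iota_{\mathcal{Z}}$ (or let $\lambda\to 0$ in the strong-convexity inequality along the segment from $\Phi(y)$ to $p$), which removes any concern about whether $\partial g$ is nonempty at boundary points of $\mathcal{Z}$.
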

For a proof, see e.g.  Proposition 4.3 (c) in \cite{Mertikopoulos2018}
\section{Decoupling the Coupled Constraints by means of Lagrangian Method}
In order to investigate $\VI(\mathcal{Q},v)$, it is convenient to extend $\VI(\mathcal{Q},v)$ to $\VI(\mathcal{X}\times \real^{R}_{\geq 0},\tilde{v})$, where $\tilde{v}:\mathcal{X}\times \real^{R}_{\geq 0}$,
\begin{equation}
\label{Eq:iaaiisjsjdhhdhdhdhdjjsss}
\tilde{v}:\mathcal{X}\times \real^{R}_{\geq 0}\rightarrow \real^{D+R},\quad (x,\lambda)\mapsto \left[\begin{array}{c}
v(x)-\mathbf{A}^{\T}\lambda\\
\mathbf{A}x- b
\end{array}
\right].
\end{equation}
The advantage of this method is the decoupling of the constraint set. Specifically, by means of this procedure, we only have to work with the constraint set $\X\times \real^{M}_{\geq 0}$ with product structure rather than with $\mathcal{Q}$. 

Usual KKT argumentation asserts that $\VI(\mathcal{Q},v)$ and $\VI(\mathcal{X}\times \real^{M}_{+},\tilde{v})$ is essentially the same in the following sense:
\begin{proposition}
	\label{Prop:aiaisshshjdhdggddd}
	Suppose that Assumption \ref{Eq:aioaoosjjshhdjjddss} holds. Then the following statements are equivalent:
	\begin{enumerate}
		\item $\overline{x}\in\mathcal{Q}$ is a solution of $\VI(\mathcal{Q},v)$
		\item There exists $\overline{\lambda}\in\real^{R}_{\geq 0}$ s.t. $(\overline{x},\overline{\lambda})$ is a solution of $\VI(\mathcal{X}\times \real^{R}_{\geq 0},\tilde{v})$. 
	\end{enumerate}
\end{proposition}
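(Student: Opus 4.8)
The plan is to recognize this proposition as the standard equivalence between a variational inequality over the coupled set $\mathcal{Q}$ and the Karush--Kuhn--Tucker (KKT) system obtained by dualizing the affine constraints $\phi(x)=\mathbf{A}x-b\le 0$, with Slater's condition (Assumption~\ref{Eq:aioaoosjjshhdjjddss}) supplying the constraint qualification that makes the dual multiplier exist. First I would exploit the product structure of the extended feasible set. Since $\tilde v$ in \eqref{Eq:iaaiisjsjdhhdhdhdhdjjsss8} acts on $\mathcal{X}\times\real^{R}_{\geq 0}$, the defining inequality of $\VI(\mathcal{X}\times\real^{R}_{\geq 0},\tilde v)$ at $(\bar x,\bar\lambda)$ splits, upon testing with $(x,\bar\lambda)$ and with $(\bar x,\lambda)$ separately, into
\begin{align*}
\langle x-\bar x,\, v(\bar x)-\mathbf{A}^{\T}\bar\lambda\rangle &\le 0 \qquad \forall x\in\mathcal{X},\\
\langle \lambda-\bar\lambda,\, \mathbf{A}\bar x-b\rangle &\le 0 \qquad \forall \lambda\in\real^{R}_{\geq 0},
\end{align*}
and conversely these two recombine into the full inequality because the inner product separates over the two blocks. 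A short computation (inserting $\lambda=\bar\lambda+e_i$, $\lambda=0$, and $\lambda=2\bar\lambda$) shows the second line is equivalent to the triple \emph{primal feasibility} $\mathbf{A}\bar x-b\le 0$, \emph{dual feasibility} $\bar\lambda\ge 0$, and \emph{complementary slackness} $\langle\bar\lambda,\mathbf{A}\bar x-b\rangle=0$. Thus statement~2 is equivalent to the stationarity variational inequality over $\mathcal{X}$ together with these three scalar conditions.

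With this reformulation, the implication $2\Rightarrow 1$ is the elementary direction and requires no qualification. Given the KKT conditions I would take an arbitrary $x\in\mathcal{Q}$ and estimate
\[
\langle x-\bar x,\, v(\bar x)\rangle \le \langle x-\bar x,\, \mathbf{A}^{\T}\bar\lambda\rangle = \langle \mathbf{A}x-b,\bar\lambda\rangle - \langle \mathbf{A}\bar x-b,\bar\lambda\rangle \le 0,
\]
where stationarity gives the first inequality, complementary slackness kills the second term, and $\mathbf{A}x-b\le 0$ together with $\bar\lambda\ge 0$ renders the remaining inner product nonpositive. Hence $\bar x\in\SOL(\mathcal{Q},v)$.

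The substance lies in $1\Rightarrow 2$, namely manufacturing the multiplier $\bar\lambda$. I would rewrite $\bar x\in\SOL(\mathcal{Q},v)$ as the normal-cone inclusion $v(\bar x)\in N_{\mathcal{Q}}(\bar x)$, where $N_{\mathcal{Q}}(\bar x):=\{w:\langle w,x-\bar x\rangle\le 0\ \forall x\in\mathcal{Q}\}$, and then invoke the decomposition for the intersection $\mathcal{Q}=\mathcal{X}\cap\{x:\mathbf{A}x-b\le 0\}$,
\[
N_{\mathcal{Q}}(\bar x)=N_{\mathcal{X}}(\bar x)+\{\mathbf{A}^{\T}\mu:\ \mu\ge 0,\ \langle\mu,\mathbf{A}\bar x-b\rangle=0\}.
\]
Once the inclusion is decomposed, there is $\bar\lambda\ge 0$ with $\langle\bar\lambda,\mathbf{A}\bar x-b\rangle=0$ and $v(\bar x)-\mathbf{A}^{\T}\bar\lambda\in N_{\mathcal{X}}(\bar x)$; the latter is precisely the stationarity inequality $\langle x-\bar x, v(\bar x)-\mathbf{A}^{\T}\bar\lambda\rangle\le 0$ on $\mathcal{X}$, while primal feasibility holds because $\bar x\in\mathcal{Q}$. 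Reassembling these three relations via the product-splitting of the first step yields that $(\bar x,\bar\lambda)$ solves $\VI(\mathcal{X}\times\real^{R}_{\geq 0},\tilde v)$.

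The hard part will be the rigorous justification of the normal-cone sum formula, which is exactly where Slater's condition enters: the existence of $x_{*}\in\text{relint}(\mathcal{X})$ with $\phi(x_{*})<0$, combined with the affineness of $\phi$, is the constraint qualification guaranteeing the additive decomposition. Rather than reprove it I would cite the standard KKT characterization of VI solutions under a Slater-type qualification (see \cite{Facchinei1}), since strict feasibility of an affine constraint places us squarely within its hypotheses. Everything else reduces to bookkeeping with the product structure and the three scalar KKT relations.
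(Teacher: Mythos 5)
Your proof is correct, and it takes a genuinely different route from the paper's. The paper never splits the product VI by hand: it characterizes both $\SOL(\mathcal{Q},v)$ and $\SOL(\mathcal{X}\times \real^{R}_{\geq 0},\tilde{v})$ through the KKT systems attached to each VI --- rewriting a VI solution as a maximizer of the linear objective $x\mapsto\inn{x}{F(\overline{x})}$ over the feasible set and invoking the standard KKT theorem for convex programs under Slater's CQ (Propositions \ref{Prop:aoaoajsshdjdhhsjsjss} and \ref{Prop:aossjdhdhhdjjssss} of the paper) --- and then observes that, once the slack multiplier $\mu$ attached to the constraint $-\lambda\leq 0$ is eliminated, the two KKT systems are literally the same system \eqref{Eq:aiaisshjhdhdhdddfffksdksddd}, so the two solution sets must coincide. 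You instead decompose the product VI into its $x$- and $\lambda$-blocks, identify the $\lambda$-block with primal feasibility, dual feasibility and complementary slackness, obtain $2\Rightarrow 1$ by a direct estimate, and obtain $1\Rightarrow 2$ from the normal-cone sum formula $N_{\mathcal{Q}}(\overline{x})=N_{\mathcal{X}}(\overline{x})+N_{\{\phi\leq 0\}}(\overline{x})$, which is exactly where Slater's condition (together with polyhedrality of $\{\phi\leq 0\}$) enters. What your organization buys is transparency about the role of the constraint qualification: your $2\Rightarrow 1$ direction needs no CQ at all, Slater being used only to manufacture the multiplier in $1\Rightarrow 2$, whereas the paper's symmetric double-KKT treatment invokes Slater in both reductions and hides this asymmetry. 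What the paper's route buys is that it avoids normal-cone calculus entirely, reducing the proposition to the bookkeeping observation that two KKT systems coincide, with all analytic content delegated to one standard theorem. Both arguments defer the deep step to standard convex analysis (you to the KKT/normal-cone characterization of VI solutions in \cite{Facchinei1}; the paper to its unproven Proposition \ref{Prop:aoaoajsshdjdhhsjsjss}), so the level of rigor is comparable and both are valid.
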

A proof of this statement is provided in Appendix \ref{App:aoaosjjshshhdhdshsjsss}.
Proposition \ref{Prop:aiaisshshjdhdggddd} tell us that in order to find a variational Nash equilibrium of $\Gamma$ (in case it exists), it is sufficient to find the solution of $\VI(\mathcal{X}\times \real^{R}_{\geq 0},\tilde{v})$. 
\section{KKT-System for Variational Inequality}
\label{App:aoaosjjshshhdhdshsjsss}
Let be $M,D\in\nat$. We consider the problem:
\begin{equation}
\label{Eq:aaoossjjdhdhdjshshs}
\max_{x\in \mathcal{D}} f(x)\quad\text{s.t. }g(x)\leq 0,
\end{equation}
where $f:\mathcal{D}\rightarrow\real$, and $g:\mathcal{D}\rightarrow\real^{M}$ continuously differentiable and $\mathcal{D}$ is a non-empty subset of $\real^{D}$. We define $\mathcal{Z}:=\lrbrace{x\in \mathcal{D}:~g(x)\leq 0}$.
The Karush-Kuhn-Tucker (KKT) system corresponding to \eqref{Eq:aaoossjjdhdhdjshshs} is defined as:
\begin{align}
&\nabla f(x)-\nabla g(x)^{T}\lambda =0\nonumber\\
&0\leq \lambda~\bot~ g(x)\leq 0\nonumber\\
&x\in\mathcal{D} \label{Eq:aaoajsjskshhddjdhdhdd}
\end{align} 

\begin{definition}[Slater constraint qualification]
	The set $\mathcal{Z}$ is said to satisfy Slater's constraint qualification if $g_{i}$ is convex for all $i\in [M]$ and if there exists $x_{*}\in\text{relint}(D)$ s.t. $g(x_{*})<0$.
\end{definition}
The following statement gives a relation between a convex optimization problem and its KKT system:
\begin{proposition}
	\label{Prop:aoaoajsshdjdhhsjsjss}
	Suppose that $\mathcal{Z}$ satisfies the Slater's CQ. Then:
	\begin{enumerate}
		\item If $\overline{x}$ solves \eqref{Eq:aaoossjjdhdhdjshshs}, then there exists $\overline{\lambda}$ s.t. $(\overline{x},\overline{\lambda})$ solves \eqref{Eq:aaoajsjskshhddjdhdhdd} 
		\item Assume that $f$ is convex on $\mathcal{D}$. If $(\overline{x},\overline{\lambda})$ solves \eqref{Eq:aaoajsjskshhddjdhdhdd} then $\overline{x}$ solves \eqref{Eq:aaoossjjdhdhdjshshs}
	\end{enumerate}
\end{proposition}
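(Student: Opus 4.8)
The plan is to handle the two implications by the two classical halves of Karush--Kuhn--Tucker theory: Part~1 is the \emph{necessity} of the multiplier conditions, whose only nontrivial ingredient is a constraint qualification extracted from Slater's condition, while Part~2 is the \emph{sufficiency} of the KKT conditions, which collapses to a one-line Lagrangian estimate once the curvature hypothesis is invoked. Throughout I would write $L(x):=f(x)-\inn{\lambda}{g(x)}$ for the Lagrangian at a fixed $\lambda\geq 0$, and I would first record the estimate that drives both parts: by convexity of each $g_i$ and the Slater point $x_*\in\text{relint}(\mathcal{D})$, for every feasible $\overline{x}$ and every active index $i$ (i.e.\ $g_i(\overline{x})=0$) one has $\inn{\nabla g_i(\overline{x})}{x_*-\overline{x}}\leq g_i(x_*)-g_i(\overline{x})=g_i(x_*)<0$. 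This strict-descent direction $d=x_*-\overline{x}$ is the engine of the necessity argument.

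For Part~1 I would use this estimate to verify that the Mangasarian--Fromovitz constraint qualification holds at the maximizer $\overline{x}$, since $d$ makes all active constraint gradients strictly negative. I would then argue by contradiction that at $\overline{x}$ the system ``$\inn{\nabla f(\overline{x})}{w}>0$ and $\inn{\nabla g_i(\overline{x})}{w}<0$ for all active $i$'' has no solution $w$ (otherwise a short step along $w$ stays feasible and strictly raises $f$, contradicting optimality), and apply a theorem of the alternative (Gordan's, or Motzkin's transposition theorem) to produce nonnegative scalars $\mu_0,(\mu_i)$, not all zero, with $\mu_0\nabla f(\overline{x})=\sum_i\mu_i\nabla g_i(\overline{x})$ and $\mu_ig_i(\overline{x})=0$. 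The direction $d$ then forces $\mu_0\neq 0$ (if $\mu_0=0$ we would get $0=\sum_i\mu_i\inn{\nabla g_i(\overline{x})}{d}<0$), so after normalizing $\overline{\lambda}:=\mu/\mu_0\geq 0$ and setting inactive multipliers to zero I obtain the stationarity identity and complementary slackness of \eqref{Eq:aaoajsjskshhddjdhdhdd}. I would treat $\mathcal{D}$ as contributing no extra normal-cone term at $\overline{x}$, consistent with the relative-interior hypothesis on $x_*$ and with the form of the stationarity equation; on the relative boundary of $\mathcal{D}$ one would carry an additional term, which is absent in the application where $\mathcal{D}=\mathcal{X}$ is retained inside the variational inequality itself.

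For Part~2 the reasoning is purely algebraic. Given a KKT pair $(\overline{x},\overline{\lambda})$, I would note that $L$ is concave in $x$ (each $g_i$ convex, $\overline{\lambda}\geq 0$, and the objective carries the curvature appropriate to the maximization \eqref{Eq:aaoossjjdhdhdjshshs}, namely concavity of $f$), so the stationarity equation in \eqref{Eq:aaoajsjskshhddjdhdhdd} already certifies that $\overline{x}$ maximizes $L$ globally over $\mathcal{D}$. For any feasible $x$ I would then chain $f(x)\leq f(x)-\inn{\overline{\lambda}}{g(x)}=L(x)\leq L(\overline{x})=f(\overline{x})-\inn{\overline{\lambda}}{g(\overline{x})}=f(\overline{x})$, where the first inequality uses $\overline{\lambda}\geq 0$ together with $g(x)\leq 0$, the middle inequality is global maximality, and the final equality is complementary slackness. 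This gives $\overline{x}$ optimal for \eqref{Eq:aaoossjjdhdhdjshshs}.

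I expect the main obstacle to be Part~1: turning Slater's condition into the existence of a genuine (bounded) multiplier is where all the real content lies, and the delicate point is ruling out the degenerate Fritz--John case $\mu_0=0$, which is exactly what the strict-descent direction $d=x_*-\overline{x}$ secures. Part~2 is routine. For both halves I would, in the final write-up, defer the separation-theorem machinery to the standard convex-analytic reference \cite{Facchinei1} rather than reproduce it in full.
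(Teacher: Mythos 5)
The paper itself offers no proof of Proposition \ref{Prop:aoaoajsshdjdhhsjsjss}: it is stated as the classical KKT theorem and implicitly deferred to standard references (e.g. \cite{Facchinei1}), the paper's own work beginning only with Propositions \ref{Prop:aossjdhdhhdjjssss} and \ref{Prop:aiaisshshjdhdggddd}, which are reduced to it. Your proposal therefore supplies an argument where the paper supplies none, and the argument you give is the standard one: for necessity, Gordan's theorem produces Fritz--John multipliers $(\mu_{0},\mu)$, and the Slater direction $d=x_{*}-\overline{x}$, which by convexity of the $g_{i}$ makes every active constraint gradient strictly negative, rules out the degenerate case $\mu_{0}=0$; for sufficiency, the Lagrangian chain $f(x)\leq L(x)\leq L(\overline{x})=f(\overline{x})$ is exactly right. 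Both halves are correct in substance, and this self-contained route is a reasonable alternative to citing the literature; the other standard option (a supporting-hyperplane/duality argument on the value function, which needs no differentiability) buys generality that is irrelevant here, since $f$ and $g$ are assumed continuously differentiable.

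Two caveats should be made explicit rather than left implicit. First, you prove Part 2 under \emph{concavity} of $f$, whereas the proposition as printed assumes $f$ convex; your version is the correct one -- for the maximization \eqref{Eq:aaoossjjdhdhdjshshs}, KKT stationarity of a convex objective certifies nothing (take $\mathcal{D}=\real$, $f(x)=x^{2}$, $g(x)=x^{2}-1$: the pair $\overline{x}=0$, $\overline{\lambda}=0$ solves \eqref{Eq:aaoajsjskshhddjdhdhdd} but minimizes $f$ on the feasible set), and the paper's hypothesis is salvaged only because in its sole application the objective $x\mapsto\inn{x}{F(\overline{x})}$ is linear, hence also concave. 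Second, your short-step feasibility argument (and indeed the truth of Part 1 as stated) requires that $\overline{x}+tw$ remain in $\mathcal{D}$ for small $t$, i.e. that $\mathcal{D}$ impose no active constraint at $\overline{x}$; the Slater point lying in $\text{relint}(\mathcal{D})$ does not give this, since $\overline{x}$ itself may sit on the boundary of $\mathcal{D}$, where the equation form of stationarity in \eqref{Eq:aaoajsjskshhddjdhdhdd} can simply fail (e.g. $\mathcal{D}=[0,1]$, $f(x)=x$, $g\equiv-1$: the maximizer $\overline{x}=1$ admits no multiplier). You flag this and scope it away in a remark, but it should appear as an explicit additional hypothesis ($\mathcal{D}$ open, or $\overline{x}$ interior to $\mathcal{D}$, or stationarity replaced by its variational-inequality form over $\mathcal{D}$); this is a defect of the paper's formulation that your proof inherits rather than introduces, and naming it precisely is the only substantive improvement left to make.
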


The idea of relating constrained convex optimization problem to unconstrained convex optimization problem applies also to variational inequality. For this sake, we define the notion of KKT system relative to the variational inequality $\VI(\mathcal{Z},F)$ 
by:
\begin{align}
&F(x)-\lambda^{T}\nabla g(x)=0\nonumber\\
&0\leq \lambda~\bot ~g(x)\leq 0\nonumber\\
&x\in\mathcal{Z}.\label{Eq:aiaisshjhdhdhddd}
\end{align}

\begin{proposition}
	\label{Prop:aossjdhdhhdjjssss}
	Suppose that $\mathcal{Z}$ satisfies the Slater's CQ. Then $\overline{x}$ solves $\VI(\mathcal{Z},F)$ if and only if there exists $\overline{\lambda}$ s.t. $(\overline{x},\overline{\lambda})$ solves the KKT system \eqref{Eq:aiaisshjhdhdhddd}
\end{proposition}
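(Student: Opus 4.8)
The plan is to reduce the variational inequality to a \emph{linear} optimization problem by freezing the operator $F$ at the candidate point $\overline{x}$, and then to invoke the optimization KKT result of Proposition \ref{Prop:aoaoajsshdjdhhsjsjss}. The key observation is that the defining condition of $\VI(\mathcal{Z},F)$, namely $\inn{x-\overline{x}}{F(\overline{x})}\leq 0$ for all $x\in\mathcal{Z}$, is exactly the first-order optimality condition for $\overline{x}$ to maximize the linear functional $\tilde{f}(x):=\inn{F(\overline{x})}{x}$ over $\mathcal{Z}$. Since $\tilde{f}$ has constant gradient $\nabla\tilde{f}\equiv F(\overline{x})$, the KKT system \eqref{Eq:aaoajsjskshhddjdhdhdd} attached to the problem \eqref{Eq:aaoossjjdhdhdjshshs} with $f$ replaced by $\tilde{f}$ collapses precisely to the VI-KKT system \eqref{Eq:aiaisshjhdhdhddd}; the only apparent discrepancy is the transpose convention between $\nabla g(\overline{x})^{\T}\overline{\lambda}$ and $\overline{\lambda}^{\T}\nabla g(\overline{x})$, which denote the same $D$-vector. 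This identification is what makes the two KKT systems literally interchangeable.

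For the direction ``$\overline{x}$ solves $\VI(\mathcal{Z},F)\Rightarrow$ KKT is solvable'', I would first rewrite the VI condition as the assertion that $\overline{x}$ maximizes $\tilde{f}$ over $\mathcal{Z}$. Because $\mathcal{Z}$ satisfies Slater's constraint qualification (each $g_i$ convex and a strict interior point available), Proposition \ref{Prop:aoaoajsshdjdhhsjsjss}(1) applies to this optimization problem and produces a multiplier $\overline{\lambda}$ such that $(\overline{x},\overline{\lambda})$ solves its KKT system; substituting $\nabla\tilde{f}=F(\overline{x})$ rewrites this as the desired VI-KKT system \eqref{Eq:aiaisshjhdhdhddd}. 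No regularity of $F$ beyond being defined at $\overline{x}$ is required, since $F$ enters only through the constant vector $F(\overline{x})$.

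For the converse, suppose $(\overline{x},\overline{\lambda})$ solves the VI-KKT system \eqref{Eq:aiaisshjhdhdhddd}. Reading this as the KKT system of the same linear problem $\max_{x\in\mathcal{D}}\tilde{f}(x)$ subject to $g(x)\leq 0$, and noting that $\tilde{f}$ is linear and hence convex on $\mathcal{D}$, I can invoke Proposition \ref{Prop:aoaoajsshdjdhhsjsjss}(2). This certifies that $\overline{x}$ is a global maximizer of $\tilde{f}$ over $\mathcal{Z}$, that is, $\inn{F(\overline{x})}{x}\leq\inn{F(\overline{x})}{\overline{x}}$ for all $x\in\mathcal{Z}$, which by symmetry of the Euclidean inner product is exactly $\inn{x-\overline{x}}{F(\overline{x})}\leq 0$; hence $\overline{x}\in\SOL(\mathcal{Z},F)$.

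The main obstacle is conceptual rather than computational: one must correctly recognize the VI condition as the optimality condition of the \emph{linearized} objective, and confirm that the convexity hypothesis needed for Proposition \ref{Prop:aoaoajsshdjdhhsjsjss}(2) is automatic because $\tilde{f}$ is linear. A secondary point requiring care is the bookkeeping of the transpose and row-versus-column conventions, so that the stationarity line $F(\overline{x})=\nabla g(\overline{x})^{\T}\overline{\lambda}$ of the optimization KKT coincides with the line $F(x)-\overline{\lambda}^{\T}\nabla g(x)=0$ of the VI-KKT system. Once these identifications are in place, both implications are a direct transcription of the cited optimization result.
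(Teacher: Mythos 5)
Your proposal is correct and follows essentially the same route as the paper: freeze $F$ at $\overline{x}$, recognize the VI condition as optimality of the linear objective $x\mapsto\inn{x}{F(\overline{x})}$ over $\mathcal{Z}$, identify its KKT system with \eqref{Eq:aiaisshjhdhdhddd}, and invoke both parts of Proposition \ref{Prop:aoaoajsshdjdhhsjsjss} using Slater's CQ and the (trivial) convexity of the linear map. Your write-up is merely more explicit than the paper's about separating the two implications and about the transpose bookkeeping, but the underlying argument is identical.
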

\begin{proof}
	$\overline{x}$ solves $\VI(\mathcal{Z},F)$ if and only if:
	\begin{equation*}
	\inn{x}{F(\overline{x})}\leq\inn{\overline{x}}{F(\overline{x})},\quad\forall x\in\mathcal{Z}.
	\end{equation*}
	In turn, the latter is equivalent with:
	\begin{equation*}
	\overline{x}\in\argmax_{x\in\mathcal{Z}}\inn{x}{F(\overline{x})}.
	\end{equation*}
	The KKT system corresponds to above program is given by \eqref{Eq:aiaisshjhdhdhddd} (cf. \eqref{Eq:aaoossjjdhdhdjshshs})
	Since $x\mapsto\inn{x}{F(\overline{x})}$ is trivially convex on $\mathcal{D}$, and $\mathcal{Z}$ satisfies the Slater's constraint qualification, then the desired statement follows from Proposition \ref{Prop:aoaoajsshdjdhhsjsjss}. 
\end{proof}
In order to obtain Proposition \ref{Prop:aiaisshshjdhdggddd} all we need is to get rid of the condition $\lambda\bot g(x)$ in the KKT system for $\VI$ (see \eqref{Eq:aiaisshjhdhdhddd}). This is done in the following:
\begin{proof}[Proof of Proposition \ref{Prop:aiaisshshjdhdggddd}]
	Since there are no explicit inequality constraint in $\mathcal{X}\times\real^{R}_{\geq 0}$ then $\mathcal{X}\times\real^{R}_{\geq 0}$ fulfills the Slater's CQ. It follows from Proposition \ref{Prop:aossjdhdhhdjjssss} that $\SOL(\mathcal{X}\times\real^{R}_{\geq 0},\tilde{v})$ coincides with the solution of the KKT system:
	\begin{align}
	&v(x)-\nabla \phi(x)\lambda=0\nonumber\\
	&\phi(x)+\mu=0\label{Eq:aiaisshjhdhdhdddfffksdks1}\\
	&0\leq \mu~\bot ~-\lambda\leq 0\label{Eq:aiaisshjhdhdhdddfffksdks2}\\
	&x\in\X~~\lambda\in\real^{R}_{\geq 0}\label{Eq:aiaisshjhdhdhdddfffksdks}.
	\end{align}
	Setting \eqref{Eq:aiaisshjhdhdhdddfffksdks1} into \eqref{Eq:aiaisshjhdhdhdddfffksdks2} we obtain that  $\SOL(\mathcal{X}\times\real^{R}_{\geq 0},\tilde{v})$ coincides with the solution of KKT system: 
	\begin{align}
	&v(x)-\nabla \phi(x)\lambda=0\nonumber\\
	&0\leq \lambda~\bot ~\phi(x)\leq 0\nonumber\\
	&x\in\X\label{Eq:aiaisshjhdhdhdddfffksdksddd}.
	\end{align}
	For the final step, notice that the solution of this KKT system is equivalent to $\SOL(\mathcal{Q},v)$ since the Slater's CQ for $\mathcal{Q}$ is fulfilled.
\end{proof}
\end{document}